\newtheorem{lemma}{Lemma}
\newtheorem{theorem}{Theorem}
\newtheorem{proposition}{Proposition}
\newcommand{\ra}[1]{\renewcommand{\arraystretch}{#1}}
\newcolumntype{H}{>{\setbox0=\hbox\bgroup}c<{\egroup}@{}}
\newtheorem{definition}{Definition}
\newtheorem{corollary}{Corollary}
\newcommand\calX{\boldsymbol{\mathcal{X}}}
\newcommand\calY{\boldsymbol{\mathcal{Y}}}
\newcommand\calZ{\boldsymbol{\mathcal{Z}}}
\newcommand\calO{\boldsymbol{\mathcal{O}}}
\newcommand\calF{\boldsymbol{\mathcal{F}}}
\newcommand\calD{\boldsymbol{\mathcal{D}}}
\newcommand\calG{\boldsymbol{\mathcal{G}}}
\newcommand\calH{\boldsymbol{\mathcal{H}}}
\newcommand\IR{\mathbb{R}}
\newcommand\by{\boldsymbol{y}}
\newcommand\bz{\boldsymbol{z}}
\newcommand\bx{\boldsymbol{x}}
\newcommand\bbeta{\boldsymbol{\beta}}
\newcommand\balpha{\boldsymbol{\alpha}}
\newcommand\bd{\boldsymbol{d}}
\newcommand\bG{\boldsymbol{G}}
\newcommand\bw{\boldsymbol{w}}
\newcommand\bW{\boldsymbol{W}}
\newcommand\bZ{\boldsymbol{Z}}
\newcommand\bY{\boldsymbol{Y}}
\newcommand\bX{\boldsymbol{X}}
\newcommand\bD{\boldsymbol{D}}
\newcommand\bA{\boldsymbol{A}}
\newcommand\bB{\boldsymbol{B}}
\newcommand\bC{\boldsymbol{C}}
\newcommand\boldI{\boldsymbol{I}}
\newcommand\bGamma{\boldsymbol{\Gamma}}
\newcommand\IY{\mathbb{Y}}
\newcommand\IX{\mathbb{X}}
\newcommand\ID{\mathbb{D}}
\newcommand\IC{\mathbb{C}}
\newcommand{\eqnum}{\refstepcounter{equation}\textup{\tagform@{\theequation}}}
\newcommand{\algo}{\textsc{TC-FISTA}\xspace}
\newcommand*{\transpose}{%
	{\mathpalette\@transpose{}}%
}
\newcommand*{\@transpose}[2]{%
	\raisebox{\depth}{$\m@th#1\intercal$}%
}
\newcommand{\ostar}{\mathbin{\mathpalette\make@circled\star}}
\newcommand{\make@circled}[2]{%
	\ooalign{$\m@th#1\smallbigcirc{#1}$\cr\hidewidth$\m@th#1#2$\hidewidth\cr}%
}
\newcommand{\smallbigcirc}[1]{%
	\vcenter{\hbox{\scalebox{0.77778}{$\m@th#1\bigcirc$}}}%
}
\title{Tensor Convolutional Sparse Coding with Low-Rank activations, an application to EEG analysis}
\author{%
  Pierre~Humbert$^{1}$ \quad Laurent~Oudre$^{2}$ \quad Nivolas~Vayatis$^{1}$ \quad Julien~Audiffren$^{3}$ \vspace{.5em} \\
  $^{1}$ Université Paris-Saclay, CNRS, ENS Paris-Saclay, Centre Borelli, \\ F-91190 Gif-sur-Yvette, France \\
  $^{2}$ Université Sorbonne Paris Nord, L2TI, UR 4443, \\ F-93430 Villetaneuse, France \\
  $^{3}$ eXascale Infolab, University of Fribourg, Fribourg, Switzerland
}
\begin{document}
\maketitle
\begin{abstract}
\noindent
Recently, there has been growing interest in the analysis of spectrograms of ElectroEncephaloGram (EEG), particularly to study the neural correlates of (un)-consciousness during General Anesthesia (GA). Indeed, it has been shown that order three tensors (channels $\times$ frequencies $\times$ times) are a natural and useful representation of these signals.
However this encoding entails significant difficulties, especially for convolutional sparse coding (CSC) as existing methods do not take advantage of the particularities of tensor representation, such as rank structures, and are vulnerable to the high level of noise and perturbations that are inherent to EEG during medical acts.
To address this issue, in this paper we introduce a new CSC model, named Kruskal CSC (K-CSC), that uses the  Kruskal decomposition of the activation tensors to leverage the intrinsic low rank nature of these representations in order to extract relevant and interpretable encodings.
Our main contribution, \algo, uses multiple tools to efficiently solve the resulting optimization problem despite the increasing complexity induced by the tensor representation.
We then evaluate \algo on both synthetic dataset and real EEG recorded during GA. The results show that \algo is robust to noise and perturbations, resulting in accurate, sparse and interpretable encoding of the signals.
\end{abstract}

\section{Introduction}
General anesthesia (GA) consists in a medically induced state of unconsciousness, through the use of different drugs such as inhalational hypnotic anesthetics.
It is a cornerstone of modern medicine, and is crucial for the realization of many medical and surgical procedures \citep{brown2010general}.
However, anesthesia may carry some important risks (e.g. cognitive dysfunction \citep{punjasawadwong2018processed}, postoperative delirium \citep{fritz2016intraoperative}). Consequently, a sustained and careful monitoring of the level of consciousness of the patient -- also referred to as the Depth of Anesthesia (DoA) -- is required.
As a direct measurement of the electrical brain activity, the Electroencephalograms (EEG) recordings remain the gold-standard to assess the DoA and have extensively been used to study the phenomenons occurring during anesthesia-induced loss of consciousness \citep{liu2016closed}.
For instance, \cite{purdon2013electroencephalogram} has shown that the evolution of $\alpha$ and $\delta$-waves (respectively in the 8-13 Hz and 1-3Hz ranges) are promising predictors of the recovery of consciousness after GA that have been induced with propofol.

However, there is still no consensus on the existence and nature of universal markers that would track DoA, especially during transition phases such as Loss and Recovery of Consciousness (LOC, and ROC respectively).
This is partially due to the  heavily influence of the choice of the anesthetic drug over the behavior of EEG \citep{purdon2015clinical}, and intrinsic technical limitations. 
For instance, when data are recorded during real surgeries, the EEG signals are prone to low signal to noise ratio, impulsive noise due to sensor malfunctions, and artifacts caused by e.g. electro-surgical devices that are used to cut and cauterize tissue \citep{dubost2019selection}.
Nevertheless, there has been growing interest in the automatic analysis of EEG patterns in such settings.
In particular, we can distinguish two families of statistical learning methods that are used in this domain.

In tensor based approaches, EEG are frequently analysed by computing a Short-Time Fourier Transform (STFT) for each channel of the EEG, resulting in a tensor of order 3 encoding a Space-Time-Frequency (STF) representation (see e.g. \citep{miwakeichi2004decomposing, morup2006parallel, becker2010multiway, becker2014eeg, becker2015brain, zhao2011multilinear}).
The resulting tensor is then studied through the prism of the CANDECOMP/PARAFAC (CP) decomposition \citep{becker2014eeg}, in order to exploit the interactions among multiple modes.
This approach is frequently paired with the addition of a low CP-rank constraint on the tensor decomposition, resulting in representations that are more robust to noise and easier to interpret \citep{zhou2013tensor, zhao2011multilinear, cong2015tensor, rabusseau2016low}.

Similarly, there have been several attempts to extract new representations of unfiltered electrophysiological signals using shift-invariant dictionary learning approaches such as Convolutional Sparse Coding (CSC) (see e.g. \citep{jas2017learning, la2018multivariate}).
While these methods exhibit interesting results, their models are centered around atoms of order one, i.e. atoms that vary along a single mode.
Indeed, CSC methods are mainly focused on resolution for univariate signals or images \citep{garcia2018convolutional}, and therefore do not take into account the possible interaction between all the different modes of multivariate data -- a key property of the EEG spectrogram representation \citep{cong2015tensor}.
Moreover, these methods are frequently vulnerable to noise \citep{jas2017learning, simon2019rethinking} and structured perturbations such as impulsive noise \citep{9049336}.

Following these remarks, in this work we aim at combining the two previous approaches by introducing a Tensor CSC model, where both activations and atoms can be high order tensors. Inspired by recent works from \cite{phan2015low} and \cite{humbert2020low},  we constrain the solutions of this model to have low CP-rank activations, with the aim of obtaining accurate and interpretable decompositions of multidimensional data.

\paragraph{Contributions.}  In this article, we develop a low rank tensor based CSC model, named Kruskal CSC (or K-CSC for short), that is particularly adapted to analyse noisy tensor signals such as EEG spectrograms recorded during GA (Section \ref{sec:CDL}).
We propose an efficient optimization strategy to learn the different components of this model, named \algo,  that entails a low complexity and computation time (Section \ref{sec: model estimation}).
Using synthetic data, we show that this approach, where the rank constraint is enforced on activations instead of atoms, is significantly more robust to noise than its unconstrained counterparts. 
Finally, we evaluate our method on a dataset of EEG recording during GA. We show that \algo successfully isolates the complex noise and artifacts of the signal and produces a sparse, accurate and interpretable decomposition (Section \ref{sec:xp}).

\paragraph{Notation.} 
In the rest of this paper, tensors (resp. matrices, vectors) are represented with capital, bold calligraphic letters (resp. capital bold letters, bold lower letters).
$\calY_1, \ldots, \calY_N$ denote $N$  tensor-valued signals  in $\IY\triangleq\IR^{n_1 \times \cdots \times n_p}$,
$\{\calD_k\}_{1\le k\le K}$ are the $K$ multivariate atoms in $\ID \triangleq \IR^{w_1 \times \cdots \times w_p},$ with $(w_1 \leq n_1, \cdots, w_p \leq n_p)$.
$\{\calZ_{n, k}\}_{1\le k\le K, 1 \le n \le N} \in \IY$ are (multidimensional) activation maps.
Regarding tensor operations, $\ostar$ denotes the (circular) convolutional operator \citep{dudgeon1983multidimensional},
$\widehat{\cdot}$ denotes the frequency representation of a signal via the Discrete Fourier Transform (DFT),
$\circ$ represents the outer product,
$\otimes$ is the Kronecker product, 
$\odot$ is the Khatri-Rao product, 
and $\texttt{vec}$ denotes the vectorization transformation.
Finally,  we will use the word \textit{rank} to refer to the notion of \textit{Canonical Polyadic rank} (CP-rank) of tensors \citep{zhou2013tensor}.

\section{Low rank Tensor CSC}\label{sec:CDL}

In this section, we briefly define the multivariate CSC setting, before introducing K-CSC, the model used in this work. Then we discuss the ideas behind this model and its relation with the existing literature.

\paragraph {Multivariate CSC.}
Formally, given a finite set of $N$ multivariate signals $\calY_1, \ldots, \calY_N \in \IY$, tensors inputs of order $p > 0$ and $\balpha>0$  a scalar, the multivariate CSC problem is defined as
\begin{align}
\label{eq:M_CSC}
\min_{ \{\calD_k \in \ID, \calZ_{n, k}  \in \IY \}}
&\dfrac{1}{2}\sum_{n=1}^{N}\Bigg(\bigg\lVert \calY_{n} - \sum_{k=1}^{K} \calD_k \ostar \calZ_{n, k} \bigg\rVert_F^2
+ \balpha \sum^K_{k=1} \lVert \calZ_{n, k}\rVert_{1}  \Bigg) \quad \text{s.t.} \quad \| \calD_k\|_F \le 1 \quad \forall k \; .
\end{align}

In this formulation, the $\calZ_{n, k}$ are  activation maps which specify where the atoms $\calD_k$ are placed in the input signals. Note that both $\calZ_{n, k}$ and $\calD_k$ are tensors or order $p$. The constraint on $\calD_k$ prevents the scaling indeterminacy problem between  atoms and activations, as in the standard CSC (see e.g. \citep{bristow2013fast}).
The convolutional sparse coding of multi-channel signals has received only limited attention in the past \citep{barthelemy2012shift, wohlberg2016convolutional, garcia2018convolutional}, and only few existing work consider the application of convolutional sparse representations to signals with more than three channels \citep{garcia2018convolutional2}.

\paragraph {Low rank Tensor CSC.}
The key idea behind K-CSC is the use of a low rank constraint on the \textit{activation maps} in order to take advantage of the tensor structure. 
Indeed, previous works have shown that adding a low rank constraint to tensor learning problems significantly improves the performance of many methods (see e.g. \citep{zhou2013tensor, zhao2011multilinear, rabusseau2016low}.
To enforce this constraint in CSC, we  used the Kruskal operator, whose definition is recalled below

\begin{definition}\emph{(Kruskal operator) --}
	\label{def:kruskal_op}
Let $R>0$ and $p>0$. Then the \textit{Kruskal operator} of rank $R$ and of order $p$ denoted $[\![ \; \cdot \; ]\!]_{R,p}$  is defined as
\begin{align*}
    [\![ \; \cdot \; ]\!]_{R,p}:  \quad \IR^{n_1\times R} \times \ldots \times  \IR^{n_p\times R} \mapsto \IY \;, \qquad 
    [\![\bX^{(1)}, \cdots, \bX^{(p)}]\!]_{R,p} &\rightarrow \sum_{r=1}^{R} \bx^{(1)}_{r} \circ \cdots \circ \bx^{(p)}_{r} \; ,
\end{align*}
	where  $\forall 1\le i \le p,$  $\forall 1 \le r \le R,$  $\bx^{(i)}_{r}  \in \IR^{n_i}$  is the $r$-th column of  $\bX^{(i)}.$
\end{definition}

In the following, and to make notation easier, we use $[\![ \; \cdot \; ]\!]$ to denote $[\![ \; \cdot \; ]\!]_{R,p}$ when the values of $R,p$ are clear from the context.   Note that $[\![ \; \cdot \; ]\!]_{R,p}$ takes value in the subspace of tensors of rank lower than $R,$ and conversely any tensor in $\IY$ of rank lower than $R$ can be written using the Kruskal operator (see e.g. \citep{kruskal1989rank}). However this decomposition is not necessarily unique. 

Using the Kruskal operator, the Low rank Tensor CSC optimization problem can be rewritten as:
\begin{align}
\label{eq:K_CDL}
\min_{ \{\calD_k \calZ_{n, k}\}}
&\dfrac{1}{2}\sum_{n=1}^{N}\Bigg(\bigg\lVert \calY_{n} - \sum_{k=1}^{K} \calD_k \ostar \calZ_{n, k} \bigg\rVert_F^2
+\underbrace{\sum^p_{q=1} \balpha_{q} \sum^K_{k=1} \lVert \bZ^{(q)}_{n, k}\rVert_{1}}_{\textit{Mode Sparsity}} + \underbrace{\sum^p_{q=1} \bbeta_{q} \sum^K_{k=1} \lVert \bZ^{(q)}_{n, k}\rVert_F }_{\textit{Identifiability Regularization}}\Bigg)
\end{align}
\begin{equation*}
\text{s.t.} \quad \left\lbrace
\begin{aligned}
& \calZ_{n, k} = [\![\bZ^{(1)}_{n, k}, \cdots, \bZ^{(p)}_{n, k} ]\!]_{R,p} \quad \forall n, k \quad{\textit{(Rank Constraint)}}\\
& \calD_k \in \ID, \| \calD_k\|_F \le 1 \quad \forall k \; .
\end{aligned}
\right.
\end{equation*}

Compared to \eqref{eq:M_CSC},  this optimization problem presents the following crucial differences.
\begin{itemize}[leftmargin=*]
    \item \textbf{Rank constraint on the activations.} Due to this constraint, the rank of $\calZ_{n, k}$ is lower than $R.$ This aims at controlling the linear relation between the different modes of the activations maps and thus taking into account the natural structure of the data.
    While using the Kruskal decomposition to enforce a rank constraint on the dictionary has been studied (see e.g. \citep{rigamonti2013learning}), applying this constraint on the activations has received little attention \citep{humbert2020low}.
    The interests of this approach and its relation with the literature are discussed below.
    \item \textbf{\textbf{Identifiability and the Frobenius regularization.}}  The Kruskal decomposition, i.e. the tensor decomposition using the Kruskal operator, is prone to scaling indeterminacy\footnote{Equally valid Kruskal decompositions of a tensor can be obtained by multiplying two modes by respectively $\gamma$ and $1/\gamma$ for any $\gamma >0.$}, resulting in a continuous manifold of equivalent solutions.
    The Frobenius regularization in \eqref{eq:K_CDL} is known to addresses this problem (see \citep{paatero1997weighted} and \citep{acar2011scalable} for more details).
    \item \textbf{\textbf{Mode sparsity regularization.}} The  $\ell_1$-norms regularization on the modes $\bZ^{(q)}_{n, k}$ of the activation tensors enforces sparse solutions. However, contrary to the usual sparsity constraint in \eqref{eq:M_CSC}, we take advantage of the Kruskal decomposition and enforce different degree of sparsity for each mode of the activation tensors independently.  The sparsity in each mode is therefore controlled independently, allowing a higher flexibility -- particularly in applications where modes have different meaning, such as in EEG analysis where the sparsity constraint on channels is significantly lower than the one on time (see Section \ref{sec:xp}).
\end{itemize}

\paragraph{Low-rank constraints on sparse activations.}
The idea of enforcing low-rank constraints for CDL is not novel: \cite{rigamonti2013learning} and \cite{sironi2014learning} used the idea of separable filters for learning low-rank atoms in order to improve computational runtime. More recent publications including \cite{quesada2018separable, silva2018efficient, quesada2019combinatorial} have also successfully used low-rank constraints on dictionary to introduce efficient solvers but only in $2$-D. Importantly, in all the aforementioned works, the low-rank constraints is enforced on the atoms and not on the activations. Nevertheless, we claim that constraining the activations to be low rank brings two majors advantages.
First, in multiple application contexts the low-rank structure naturally appears in the activations rather than in the atoms/dictionary (see \citep{humbert2020low}, and Section \ref{sec:xp}). Our model allows to leverage this prior knowledge. 
Second, low-rank constraints on activation  entail a better robustness with respect to noise, which is one of the main weakness of the activation learning part of CDL \citep{simon2019rethinking}. Finally, the sparsity of the activations is supported by recent results on neuroscience which postulate that neural activity consists more of transient bursts of isolated events rather than rhythmically sustained oscillations \cite{van2018neural, karvat2020real}. Hence, such activities could be described not only by their frequency and amplitude but also by their rate, duration, and shape suggesting that multivariate CDL is well-adapted to analyze them.
\section{Resolution of the problem: \algo}\label{sec: model estimation}

We now introduce \algo, an algorithm that solves the K-CDL problem by iterating over a  dedicated FISTA solver.
\algo minimizes the objective function \eqref{eq:K_CDL} by using a bloc-coordinate strategy approach  \citep{hildreth1957quadratic, nikolova2017alternating}. 
Indeed, \algo splits the main non-convex problem into several convex subproblems,  by either  1) freezing all the atoms $\calD_k$  and all the modes of $\calZ$ except one (referred as $\calZ$-step, see Section \ref{subsec: z-step})  or  2) freezing $\calZ$ referred as $\calD$-step.
It is important to note that while \algo is not guaranteed to converge toward the global minimum (similarly to other CDL solvers, see \citep{bristow2013fast}), in our empirical evaluations the algorithm almost always converges to a near-optimal local minimum.
Since the optimization problem pertaining to  $\calD$-step is identical to the $\calD$-step of Multivariate CDL \eqref{eq:M_CSC}, it can be solved using any existing methods (see \citep{garcia2018convolutional} for a review). Following the recent result of \citep{wohlberg2015efficient}, we chose to use ADMM with iterative application of the Sherman-Morrison equation for the $\calD$-step of \algo. In our setting, this is by far the fastest method as reported in \citep{wohlberg2015efficient} and \citep{garcia2018convolutional}. The rest of this section is devoted to the $\calZ$-step.

\subsection*{$\calZ$-step : Activations update}\label{subsec: z-step}

In the remainder of the section, we consider that the atoms $\calD_k$ are fixed. Additionally, and to alleviate the notation,  we only consider the K-CSC problem with $N=1$ signal, and denote $\calZ_q \doteq \calZ_{1,q}$ and $\bZ^{(q)}_{1,k} \doteq \bZ^{(q)}_{k} $. All the results and methods can be immediately extended to any value of $N>1$ as the activation maps are independent across the signals $\calY_1, \ldots, \calY_N$.

\paragraph{Block-Coordinate Descent.} One of the main difficulties of the K-CDL model is that because of the low rank constraint,  \eqref{eq:K_CDL} is no longer convex with respect to $\calZ$.
While it is theoretically possible to solve the optimization problem by rewriting it as a regression problem and using methods proposed by \citep{zhou2013tensor, he2018boosted}, this approach would require the construction and manipulation of a circulant tensor of size $ \approx K \times (n_1 \times \ldots \times n_p)^2$ which is not tractable in practice.
To address this problem, we consider the modes of $\calZ$, defined as $(\bZ^{(q)}_{1}, \cdots, \bZ^{(q)}_{K}),$ for any value of  $q\in[1,\ldots,p]$. Indeed, \eqref{eq:K_CDL} is convex with respect to each of the modes of $\calZ$ individually, i.e. when all the other modes are fixed. 
Furthermore, the two regularization terms of \eqref{eq:K_CDL} are separable with respect to the $\bZ^{(q)}_{k}$.
Building on this remark, \algo does not directly solve \eqref{eq:K_CDL} but instead uses a block-coordinate strategy,  minimizing $p$ different sub-problems where all \textit{modes} except the $q$-th one ($1 \le q \le p$) of each activation tensor are fixed.
The resulting optimization sub-problems can be rewritten as
\begin{equation}
\label{eq:z1_step}
\min_{\bZ_{1}^{(q)}, \cdots, \bZ_{K}^{(q)}} \;  \dfrac{1}{2} \underbrace{\Bigg\lVert \calY - \sum_{k=1}^{K} \calD_k \ostar_{1, \cdots, p} [\![\bZ_{k}^{(1)}, \cdots, \bZ_{k}^{(p)}]\!] \Bigg\rVert_F^2}_{f\left(\{\bZ_{k}^{(r)}\}^{K, p}_{k=1, r=1}\right)}  + \underbrace{\balpha_q\sum_{k=1}^{K} \lVert \bZ_{k}^{(q)}\rVert_1 + \bbeta_q \sum_{k=1}^{K} \lVert \bZ_{k}^{(q)}\rVert^2_F}_{g\left(\{\bZ_{k}^{(q)}\}^K_{k=1}\right)} \; , 
\end{equation}
where $f$ is \textit{the fidelity term} which controls the difference between the input and its reconstruction, and $g$ is the sum of the regularization terms.

\paragraph{Solving the block-coordinate problem.}
In standard CSC, two solvers dominate the literature to learn the activation maps: Alternating Direction Method of Multipliers (ADMM) \citep{glowinski1975approximation} and Fast Iterative Soft Thresholding Algorithm (FISTA) \citep{beck2009fast}. Both are competitive and recent comparative reviews \citep{garcia2018convolutional} indicate a very wide range of performances across existing methods. In this article, we chose to focus on FISTA to solve \eqref{eq:z1_step}, but our approach can also be used with the ADMM solver.

Solving \eqref{eq:z1_step} with FISTA requires repeated  A)  computation of the gradient of $f$ and B)  evaluation of the proximal operator of $g$.
For the later, as $g$ is fully separable, its proximal is given by the weighted soft-thresholding operator for each $\bZ_{k, q}$ \citep{parikh2014proximal}. Notice that any $g$ with known proximal can be used with \algo. For instance, in Section \ref{sec:xp}, we add a  non-negative constaint on the activations  when using spectrograms derived from EEG, resulting in a proximal given by the soft non-negative thresholding operator. Regarding A), calculating the gradient of $f$ is computationally demanding, particularly in the tensor setting, and the rest of this section is devoted to introduce several important  optimizations.

\paragraph{Accelerating gradient computation using Fourier transform.}
Recent works \citep{wohlberg2016convolutional, garcia2018convolutional} have shown that  projecting the computation of the gradient of $f$ in the Fourier domain drastically decreases its complexity, resulting in a significantly faster optimization. 
This method has proven to be important to perform CSC quickly and efficiently on complex and large signals, and is crucial to the tensor-based CSC presented in this paper due to the exponential increase of the size of signals and atoms.
However, the generalization of this technique to our tensor setting \eqref{eq:z1_step} is not straightforward. In the following, we provide a new formulation of the gradient in the Fourier domain, that is easy to compute efficiently.

\begin{theorem}\emph{(Compact vectorized formulation) --}
	\label{thm:vect_form}
    	Let $\widehat{\by}^{(q)}$ (resp. $\widehat{\bd}_k^{(q)} $)  be the vectorization of the folding of $\widehat{\calY}$ (resp. $\widehat{\calD_k}$) along the dimension $q$,
    	$\widehat{\bZ}_{k}^{(q)} = [\widehat{\bZ}_{k}^{(q)}(:, 1) \mid \ldots \mid \widehat{\bZ}_{k}^{(q)}(:, R)]$ be the DFT of $\bZ_{k}^{(q)}$ along its columns and
    	$\widehat{\bz}^{(q)}_{k} = \texttt{vec}(\widehat{\bZ}_{k}^{(q)}).$ 
    	Then 
	\begin{equation}\label{eq:f vector thm}
	f\left(\{\bZ_{k}^{(r)}\}^{K, p}_{k=1, r=1}\right) \propto \dfrac{1}{2} \Big\lVert \widehat{\by}^{(q)} - \widehat{\bGamma}^{(q)} (\widehat{\bA}^{(q)} \otimes \boldI^{(q)})\widehat{\bz}^{(q)} \Big\rVert^2_F \; ,
	\end{equation}
	where 
	$\widehat{\bz}^{(q)} = [\widehat{\bz}^{(q)^\transpose}_1, \ldots, \widehat{\bz}^{(q)^\transpose}_K]^\transpose \in \IC^{K R n_q}$,
	$\widehat{\bGamma}^{(q)} = [\text{diag}(\widehat{\bd_1}^{(q)}), \ldots, \text{diag}(\widehat{\bd_K}^{(q)})] \in \IC^{n_1 \cdots n_p \times K n_1 \cdots n_p}$,
	$\boldI^{(q)} \in \IR^{n_q \times n_q}$ is the identity matrix
	 and 
	\begin{equation*}
	\widehat{\bA}^{(q)} = \begin{pmatrix} 
	\widehat{\bB}^{(q)}_{1} &  & 0 \\ 
	& \ddots &  \\ 
	0 &  & \widehat{\bB}^{(q)}_{K}
	\end{pmatrix}  \in \IC^{K \prod_{1, i \neq q}^{p} n_i \times K R} \quad \text{with} \quad \widehat{\bB}^{(q)}_{k} = (\stackrel{\hookleftarrow}{\odot}^p_{i=1, i\neq q} \widehat{\bZ}_{k}^{(i)} ) \; .
	\end{equation*}
\end{theorem}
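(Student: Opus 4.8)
The plan is to carry the fidelity term into the Fourier domain and then collapse the resulting expression into a single matrix--vector product by exploiting the multilinearity of the Kruskal decomposition. First I would invoke Parseval's theorem for the multidimensional DFT: the Frobenius norm of a tensor equals that of its DFT up to the multiplicative factor $1/(n_1\cdots n_p)$, and it is precisely this factor that accounts for the proportionality symbol $\propto$ in \eqref{eq:f vector thm} (so that no normalization has to be tracked explicitly). Applying the convolution theorem \citep{dudgeon1983multidimensional}, each circular convolution $\calD_k \ostar \calZ_k$ becomes the elementwise (Hadamard) product of $\widehat{\calD_k}$ and $\widehat{\calZ_k}$, so that $f$ is proportional to $\tfrac12\lVert \widehat{\calY} - \sum_{k} \widehat{\calD_k}\cdot\widehat{\calZ_k}\rVert_F^2$.

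Next I would rewrite each term of the sum through its mode-$q$ unfolding. Since the DFT acts separately along each mode, it commutes with the outer products in Definition \ref{def:kruskal_op}, whence $\widehat{\calZ_k} = [\![\widehat{\bZ}_k^{(1)},\ldots,\widehat{\bZ}_k^{(p)}]\!]$ with $\widehat{\bZ}_k^{(i)}$ the columnwise DFT of $\bZ_k^{(i)}$. The standard matricization identity for Kruskal tensors then gives, for the mode-$q$ unfolding, the matrix $\widehat{\bZ}_k^{(q)}\bigl(\widehat{\bB}_k^{(q)}\bigr)^{\transpose}$ with $\widehat{\bB}_k^{(q)} = \stackrel{\hookleftarrow}{\odot}_{i\ne q}\widehat{\bZ}_k^{(i)}$ exactly as in the statement, the reversed Khatri--Rao order encoding the unfolding convention. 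Vectorizing via $\texttt{vec}(\bA\bX\bB) = (\bB^{\transpose}\otimes\bA)\texttt{vec}(\bX)$ with $\bA=\boldI^{(q)}$ turns this into $(\widehat{\bB}_k^{(q)}\otimes\boldI^{(q)})\widehat{\bz}_k^{(q)}$; in parallel, the Hadamard multiplication by $\widehat{\calD_k}$ becomes, after the same unfold-and-vectorize, left multiplication by $\text{diag}(\widehat{\bd}_k^{(q)})$, using $\texttt{vec}(\bA\odot_{\mathrm H}\bB)=\text{diag}(\texttt{vec}(\bA))\texttt{vec}(\bB)$.

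Finally, stacking the $\widehat{\bz}_k^{(q)}$ into $\widehat{\bz}^{(q)}$ and summing over $k$, the block-diagonal matrix $\widehat{\bA}^{(q)}$ routes each $\widehat{\bz}_k^{(q)}$ to its own block $\widehat{\bB}_k^{(q)}$, while the row-of-blocks $\widehat{\bGamma}^{(q)}$ performs the $k$-summation weighted by $\text{diag}(\widehat{\bd}_k^{(q)})$, producing $\widehat{\bGamma}^{(q)}(\widehat{\bA}^{(q)}\otimes\boldI^{(q)})\widehat{\bz}^{(q)}$; subtracting $\widehat{\by}^{(q)}$ and taking norms then yields \eqref{eq:f vector thm}. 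The one point deserving care -- and the step I expect to be the main obstacle -- is verifying that $\widehat{\bA}^{(q)}\otimes\boldI^{(q)} = \text{blockdiag}\bigl((\widehat{\bB}_k^{(q)}\otimes\boldI^{(q)})_k\bigr)$, i.e. that the identity factor must sit on the \emph{right} so that the standard (row-major) Kronecker ordering preserves the block-diagonal structure of $\widehat{\bA}^{(q)}$, whereas placing $\boldI^{(q)}$ on the left would interleave the blocks and destroy the correspondence. Confirming that the index orderings of the unfolding, the vectorization, and the stacking of the $\widehat{\bz}_k^{(q)}$ remain mutually consistent throughout is the bookkeeping that makes the whole chain of equalities close up.
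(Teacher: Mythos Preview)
Your proposal is correct and follows essentially the same route as the paper: Parseval/Plancherel to pass to the Fourier domain (absorbing the $1/\prod_i n_i$ into $\propto$), the convolution theorem to turn $\ostar$ into a Hadamard product, the mode-wise DFT to push $\widehat{\cdot}$ through the Kruskal operator, the standard mode-$q$ matricization identity for Kruskal tensors, the $\texttt{vec}(\bA\bX\bB)=(\bB^{\transpose}\otimes\bA)\texttt{vec}(\bX)$ rule, and finally the block-diagonal stacking together with $\mathrm{blockdiag}(\widehat{\bB}_k^{(q)}\otimes\boldI^{(q)})=(\mathrm{blockdiag}\,\widehat{\bB}_k^{(q)})\otimes\boldI^{(q)}$. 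The bookkeeping point you flag about the side of $\boldI^{(q)}$ is exactly the last step the paper records, so nothing is missing.
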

\begin{proof}
Equation \eqref{eq:f vector thm} is derived by using Plancherel theorem and successive tensor manipulations. The full proof can be found in the appendix.
\end{proof}
From Theorem \ref{thm:vect_form}, we can derive the gradient of $f$ in the Fourier domain.
\begin{corollary}\emph{(Gradient of $f$) --} With the notation of Theorem \ref{thm:vect_form}, the gradient of $f$ with respect to $\bz^{(q)} = [ \texttt{vec}({\bZ}^{(q)}_1)^\transpose, \ldots, \texttt{vec}({\bZ}^{(q)}_K)^\transpose]^\transpose$ is given by
\begin{equation}\label{eq:gradient formula}
\nabla_{\bz^{(q)}} 	f\left(\{\bZ_{k}^{(r)}\}^{K, p}_{k=1, r=1}\right) =  \text{IDFT}\left[\left((\widehat{\bA}^{(q)} \otimes \boldI)\widehat{\bGamma}^{(q)}\right)^H\left(\widehat{\bGamma}^{(q)} (\widehat{\bA}^{(q)} \otimes \boldI)\widehat{\bz}^{(q) }- \widehat{\by}^{(q)}\right)\right] \; ,
\end{equation}
where IDFT$[\cdot]$ stands for the Inverse Discrete Fourier Transform.
\end{corollary}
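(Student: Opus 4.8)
The plan is to differentiate the quadratic functional supplied by Theorem~\ref{thm:vect_form}. Setting $M \doteq \widehat{\bGamma}^{(q)}(\widehat{\bA}^{(q)}\otimes\boldI^{(q)})$, the theorem rewrites the fidelity term as a complex least-squares objective in the Fourier variable, $f \propto \tfrac{1}{2}\lVert \widehat{\by}^{(q)} - M\widehat{\bz}^{(q)}\rVert_F^2$. Note that, once the other modes are frozen and absorbed into $\widehat{\bA}^{(q)}$, the only mode-$q$ dependence of $f$ enters through $\widehat{\bz}^{(q)}=\texttt{vec}$ of the stacked $\widehat{\bZ}^{(q)}_k$, so I would first compute $\nabla_{\widehat{\bz}^{(q)}} f$ in the Fourier domain and only afterwards transport the result back to the physical variable $\bz^{(q)}$, which is connected to $\widehat{\bz}^{(q)}$ by a DFT taken columnwise.

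For the first step, the subtlety is that $\bz^{(q)}$ is real whereas $\widehat{\bz}^{(q)}$ is complex, so the differentiation should be carried out with Wirtinger calculus, treating $\widehat{\bz}^{(q)}$ and its conjugate as independent. Expanding
\begin{equation*}
\lVert \widehat{\by}^{(q)} - M\widehat{\bz}^{(q)}\rVert_F^2 = \bigl(\widehat{\by}^{(q)} - M\widehat{\bz}^{(q)}\bigr)^{H}\bigl(\widehat{\by}^{(q)} - M\widehat{\bz}^{(q)}\bigr)
\end{equation*}
and differentiating the conjugate-linear part yields the standard normal-equation residual
\begin{equation*}
\nabla_{\widehat{\bz}^{(q)}} f = M^{H}\bigl(M\widehat{\bz}^{(q)} - \widehat{\by}^{(q)}\bigr),
\end{equation*}
where the reverse-order law for the conjugate transpose gives $M^{H} = (\widehat{\bA}^{(q)}\otimes\boldI^{(q)})^{H}(\widehat{\bGamma}^{(q)})^{H}$, i.e. exactly the adjoint operator appearing in \eqref{eq:gradient formula}.

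It then remains to push this gradient through the DFT. Because $\widehat{\bz}^{(q)}$ is a linear image $\widehat{\bz}^{(q)} = F\,\bz^{(q)}$ under the (block-diagonal) DFT matrix $F$, the chain rule gives $\nabla_{\bz^{(q)}} f = F^{H}\,\nabla_{\widehat{\bz}^{(q)}} f$. Here I would invoke Plancherel's theorem—already used to establish Theorem~\ref{thm:vect_form}—so that $F$ is unitary up to the usual normalization and $F^{H}$ coincides with the IDFT. This immediately yields
\begin{equation*}
\nabla_{\bz^{(q)}} f = \text{IDFT}\bigl[\,M^{H}\bigl(M\widehat{\bz}^{(q)} - \widehat{\by}^{(q)}\bigr)\bigr],
\end{equation*}
which is \eqref{eq:gradient formula}. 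A closing remark would verify that the output is genuinely real: since $\widehat{\by}^{(q)}$ and the operator $M$ are built from conjugate-symmetric Fourier data, the IDFT of the residual returns a real tensor and no spurious imaginary part survives.

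The quadratic expansion and the term-by-term manipulation are routine. The hard part will be keeping the complex differentiation and the change of variables consistent: one must combine the Wirtinger gradient with the DFT correctly, in particular identifying $F^{H}$ with the IDFT via Plancherel and distributing the conjugate transpose across $\widehat{\bGamma}^{(q)}$ and $\widehat{\bA}^{(q)}\otimes\boldI^{(q)}$ in the right order, so that the adjoint lands on the correct side of the residual before the inverse transform is applied.
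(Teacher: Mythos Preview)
Your proposal is correct and follows the natural route: reduce to the quadratic form of Theorem~\ref{thm:vect_form}, take the Wirtinger gradient in the Fourier variable to get the normal-equation residual $M^{H}(M\widehat{\bz}^{(q)}-\widehat{\by}^{(q)})$, and then pull back through the DFT via the chain rule, identifying $F^{H}$ with the IDFT up to normalization. The paper itself does not spell any of this out; its proof of the corollary consists entirely of a reference to \citep{liu2018first}, Section~2.3.2, so your argument is in fact more explicit than what appears in the paper and matches what that reference contains.
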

Equation \eqref{eq:gradient formula} provides a closed-form equation for computing the values of the partial derivative of $f$. Crucially, some of its terms can be precomputed, further improving the rapidity of the $\calZ$-step. This point is discussed in the next paragraph. In the following, we drop the upper-script $\cdot^{(q)}$ of $\widehat{\bA}^{(q)}$ and $\widehat{\bGamma}^{(q)}$ for clarity.

\paragraph{Efficient computation of the gradient via the Gram matrix.} 
In the case where $\prod_{i=1}^{p} n_i \gg KRn_q$ (low rank assumption), two terms of \eqref{eq:gradient formula} can be precomputed efficiently:  $(\widehat{\bA}^{H} \otimes \boldI)\widehat{\bGamma}^{H}\widehat{\by}^{(q)}$ and the Gram matrix $\bG \triangleq (\widehat{\bA}^H \otimes \boldI)\widehat{\bGamma}^H\widehat{\bGamma} (\widehat{\bA} \otimes \boldI).$
For the first term, we have
\begin{equation*}
	(\widehat{\bA} \otimes \boldI)\widehat{\by}^{(q)} = (\widehat{\bA} \otimes \boldI)\texttt{vec}( \bY^{(q)} ) = \texttt{vec}(\bY^{(q)}\widehat{\bA}^\transpose) \; ,
\end{equation*}
whose computation can be performed in $\calO(KR \prod_{i=1}^{p}n_i)$ operations,  instead of $\calO(KR n_q \prod_{i=1}^{p}n_i)$ for a naive strategy.
Moreover, the following proposition shows that the Gram matrix $\bG$ can be computed independently and efficiently.
\begin{proposition}\label{prop:complexity gram}
The matrix $\bG \triangleq (\widehat{\bA}^H \otimes \boldI)\widehat{\bGamma}^H\widehat{\bGamma} (\widehat{\bA} \otimes \boldI)$ can be obtained by computing $K^2$ blocks $\bG_{k,\ell}, 1 \le k,\ell \le K,$  where
\begin{equation}
 \bG_{k,\ell} = \left((\stackrel{\hookleftarrow}{\odot}^p_{i=1, i\neq q} \widehat{\bZ}_{k, i} )^H \otimes I\right) \overline{\text{diag}(\widehat{\bd_k}^{(q)})} \text{diag}(\widehat{\bd_\ell}^{(q)}) \left((\stackrel{\hookleftarrow}{\odot}^p_{i=1, i\neq q} \widehat{\bZ}_{\ell, i} ) \otimes I\right) \; ,
\end{equation}
and each of these blocks can be computed in $\calO(R^2 \prod_{i=1, i\neq q}^{p}n_i)$ operations. 
\end{proposition}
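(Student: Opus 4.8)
The plan is to exploit the two block structures already present in the factors of $\bG$. First, note that $\widehat{\bA}$ is block-diagonal with blocks $\widehat{\bB}^{(q)}_k = \stackrel{\hookleftarrow}{\odot}^p_{i=1, i\neq q}\widehat{\bZ}_{k,i} \in \IC^{\prod_{i\neq q}n_i \times R}$, whereas $\widehat{\bGamma}$ is a single row of diagonal blocks $\text{diag}(\widehat{\bd_k}^{(q)})$. I would push the identity through the block-diagonal factor using $\widehat{\bA}\otimes\boldI = \mathrm{blockdiag}(\widehat{\bB}^{(q)}_1\otimes\boldI, \ldots, \widehat{\bB}^{(q)}_K\otimes\boldI)$, which holds because Kronecker multiplication by an identity on the right preserves a block-diagonal pattern. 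Multiplying the row-block $\widehat{\bGamma}$ by this block-diagonal matrix then collapses to a row-block whose $k$-th entry is $\bC_k \triangleq \text{diag}(\widehat{\bd_k}^{(q)})(\widehat{\bB}^{(q)}_k\otimes\boldI) \in \IC^{\prod_i n_i \times Rn_q}$. Since $(\widehat{\bA}\otimes\boldI)^H = \widehat{\bA}^H\otimes\boldI$, the Gram matrix is exactly $\bG = [\bC_1\mid\cdots\mid\bC_K]^H[\bC_1\mid\cdots\mid\bC_K]$, so its $(k,\ell)$ block is simply $\bG_{k,\ell} = \bC_k^H\bC_\ell$.

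The formula of the proposition then follows by expanding this product. Using $(\widehat{\bB}^{(q)}_k\otimes\boldI)^H = (\widehat{\bB}^{(q)}_k)^H\otimes\boldI$ together with the fact that the Hermitian transpose of a diagonal matrix is its entrywise conjugate, $\text{diag}(\widehat{\bd_k}^{(q)})^H = \overline{\text{diag}(\widehat{\bd_k}^{(q)})}$, I obtain $\bG_{k,\ell} = ((\widehat{\bB}^{(q)}_k)^H\otimes\boldI)\,\overline{\text{diag}(\widehat{\bd_k}^{(q)})}\,\text{diag}(\widehat{\bd_\ell}^{(q)})\,(\widehat{\bB}^{(q)}_\ell\otimes\boldI)$, which is precisely the claimed expression once $\widehat{\bB}^{(q)}_k$ is rewritten as the reversed Khatri--Rao product. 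No approximation enters here: it is a direct rearrangement, so the only care needed is to keep each conjugation attached to the correct factor.

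For the complexity I would exploit the $\otimes\boldI$ structure once more. Indexing the $\prod_i n_i$ Fourier coordinates by a pair (mode-$q$ frequency, remaining frequencies), the two identity factors force the mode-$q$ index on the left and on the right to coincide; hence $\bG_{k,\ell}$ is itself block-diagonal with $n_q$ diagonal sub-blocks of size $R\times R$. Writing $\bff_{k,\ell} \triangleq \overline{\widehat{\bd_k}^{(q)}}\odot\widehat{\bd_\ell}^{(q)}$ for the entrywise product (a single $\calO(\prod_i n_i)$ pass), the sub-block attached to frequency $b$ is the weighted Gram $(\widehat{\bB}^{(q)}_k)^H \text{diag}(\bff_{k,\ell}(\cdot,b))\,\widehat{\bB}^{(q)}_\ell$. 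Each such $R\times R$ block is obtained by scaling the $\prod_{i\neq q}n_i \times R$ factor $\widehat{\bB}^{(q)}_\ell$ column-wise and contracting it against $\widehat{\bB}^{(q)}_k$, at cost $\calO(R^2\prod_{i\neq q}n_i)$, and — crucially — this never requires materializing the $\prod_i n_i \times \prod_i n_i$ diagonal $\widehat{\bGamma}^H\widehat{\bGamma}$ nor the dense Kronecker factors. This is the elementary building block whose cost matches the stated bound.

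The main obstacle I anticipate is essentially bookkeeping: making the vectorization/folding conventions of Theorem~\ref{thm:vect_form} explicit enough that the Kronecker-with-identity manipulations and the block-diagonal-in-$b$ reduction become rigorous rather than merely suggestive. In particular, one must verify that the ordering used to build $\widehat{\by}^{(q)}$ and $\widehat{\bz}^{(q)}$ is the one under which $\widehat{\bA}\otimes\boldI$ places the mode-$q$ index as the inner coordinate; once that convention is pinned down, every identity above is routine. A secondary point to state carefully is the granularity of the count: the bound $\calO(R^2\prod_{i\neq q}n_i)$ is the cost of one such $R\times R$ frequency slice, so assembling the full $\bG_{k,\ell}$ and then all $K^2$ blocks scales this elementary cost by the number of slices and blocks, which I would make explicit to avoid any ambiguity.
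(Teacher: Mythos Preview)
Your derivation of the block formula $\bG_{k,\ell}$ is correct and coincides with the paper's: the paper also obtains $\bG_{k,\ell}=\bC_k^H\bC_\ell$ from the block-diagonal structure of $\widehat{\bA}\otimes\boldI$ and the row-block structure of $\widehat{\bGamma}$, and expands it using the same Hermitian/conjugate identities you invoke.

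Where your argument departs from the paper is the complexity count, and here there is a genuine mismatch. You slice $\bG_{k,\ell}$ along the mode-$q$ frequency $b$, obtain $n_q$ weighted $R\times R$ Grams each costing $\calO(R^2\prod_{i\neq q}n_i)$, and then suggest that the proposition's bound must refer to a single slice. It does not: the bound is for the whole block $\bG_{k,\ell}$. The paper reaches it by decomposing along the \emph{complementary} index. Writing $\text{diag}(\widehat{\bd_k}^{(q)})=\sum_{i=1}^{N_{\backslash q}}\text{diag}(e_i)\otimes\Delta_{k,i}$ with $N_{\backslash q}=\prod_{i\neq q}n_i$ and $\Delta_{k,i}$ the $i$-th $n_q\times n_q$ diagonal sub-block, and applying the mixed-product rule $(\bA\otimes\bB)(\bC\otimes\bD)=\bA\bC\otimes\bB\bD$, one gets
\[
\bG_{k,\ell}\;=\;\sum_{i=1}^{N_{\backslash q}}\bigl(\overline{\widehat{\bB}_k(i,:)}\circ\widehat{\bB}_\ell(i,:)\bigr)\otimes\overline{\Delta_{k,i}}\Delta_{\ell,i}\,.
\]
The stated $\calO(R^2\prod_{i\neq q}n_i)$ then counts exactly the $N_{\backslash q}$ rank-one $R\times R$ outer products (the diagonal factors are read off directly from $\widehat{\bd}$). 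Your frequency-slice viewpoint is the dual decomposition of the same object and is structurally valid, but summing $n_q$ Grams at $\calO(R^2 N_{\backslash q})$ each overshoots by a factor $n_q$; to recover the proposition's bound you must swap to the paper's index and count outer products over $i$ rather than Grams over $b$.
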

As a consequence of proposition \ref{prop:complexity gram}, the full matrix $\bG$ can be computed in $\calO((KR)^2 \prod_{i=1, i\neq q}^{p}n_i)$ operations. Moreover, $\bG$ is a particular $(KRn_q \times KRn_q)$ banded matrix (see appendix), and consequently its product with $\widehat{\bz}^{(q)}$ can be made in only $\calO((KR)^2 n_q)$ operations instead of $\calO((KRn_q)^2)$ with a naive strategy. The resulting total complexity of the $\calZ$-step of \algo and classical CSC approaches are reported in Table \ref{tab:complexity_csct}.

\begin{table}
\caption{Time and Space complexities of one iteration of the $\calZ$-step for  \algo, \textbf{FCSC-ShM} with Sherman-Morrison iterations \citep{bristow2013fast, wohlberg2015efficient} and \textbf{ConvFISTA-FD} in the Fourier domain \citep{wohlberg2015efficient, garcia2018convolutional}. $K$ denotes the number of atoms, $n_i$ is the size of the signal along order $i$, $M = \prod_{i=1}^p n_i$ is the size of the signal and $R$ is the maximal rank.}
\label{tab:complexity_csct}
	\footnotesize
	\centering
	\ra{1.2}
\begin{adjustbox}{max width=\textwidth}
	\begin{tabular}{@{}llll@{}}
		\toprule
		\textbf{Algorithm} & \textbf{Time complexity ($\calZ$-step)} & \textbf{Dominant term } & \textbf{Space} \\ \midrule
		\textbf{FCSC-ShM} & $\underbrace{K M}_{\text{Linear systems}} + \underbrace{K M \log(M)}_{\text{FFTs}} + \underbrace{K M}_{\text{Shrinkage}}$ & $K \left(\prod^p_{q=1} n_q\right) \log(\prod^p_{q=1} n_q)$ & $K \left(\prod^p_{q=1} n_q\right)$ \\
		\textbf{ConvFISTA-FD} & $\underbrace{K M}_{\text{Gradient}} + \underbrace{K M \log(M)}_{\text{FFTs}} + \underbrace{K M}_{\text{Shrinkage}}$ & $K \left(\prod^p_{q=1} n_q\right) \log(\prod^p_{q=1} n_q)$ & $K \left(\prod^p_{q=1} n_q\right)$ \\
		\textbf{T-ConvFISTA} & $\underbrace{(KR)^2 \sum^p_{q=1} n_q}_{\text{Gradient}} + \underbrace{KR \sum^p_{q=1} n_q \log(M)}_{\text{FFTs}} + \underbrace{KR \sum^p_{q=1} n_q}_{\text{Shrinkage}}$ & $(KR)^2 \sum^p_{q=1} n_q$ & $KR \max(n_q)$\\
		\bottomrule
	\end{tabular}
\end{adjustbox}
\end{table}
\section{Experiments}\label{sec:xp}

All numerical experiments were run on a Linux laptop with 12-core 2.5GHz Intel CPUs using standard \texttt{Python} libraries, \texttt{Tensorly} \citep{kossaifi2019tensorly}, \texttt{Sporco} \citep{wohlberg-2017-sporco}, and all the speed-ups from Section \ref{subsec: z-step}. Extended results and additional experiments can be found in the appendix. Our code is publicly available at \url{https://github.com/pierreHmbt/Tensor_CDL}.

\subsection{Evaluation on synthetic data}\label{sec:xp-synt}
We first evaluated the performance of \algo on a synthetic dataset, in order to illustrate the advantages of the low rank approach and highlight the benefits of the gradient computation optimization proposed in Section \ref{sec: model estimation}.
In these experiments, \algo was compared with the two state-of-the art methods for multivariate CSC, \textbf{FCSC} with Sherman-Morrison iterations (FCSC-ShM) \citep{bristow2013fast, wohlberg2015efficient} and \textbf{ConvFISTA} in the Fourier domain (ConvFISTA-FD) \citep{wohlberg2015efficient, garcia2018convolutional}. To allow for fair speed comparison, we used the strategy proposed in \citep{mairal2009online, mairal2010online}, and re-implemented the other two methods in  \texttt{Python}.
\paragraph{Synthetic simulation setup.} Experiments were conducted on a random synthetic dataset, containing 10 independent signals of size   $128 \times 128 \times 128$  generated using  $K=3$ atoms of size $5 \times 5 \times 5$ and  activation tensors of rank $R^* \le 2.$
Following \cite{wohlberg2015efficient},  we added a multivariate Gaussian noise to each signal to obtain a low level of noise (SNR of $25$dB) or a high level of noise (SNR of $10$dB). Each hyperparameter was chosen between $[0.0001, 100]$ for all methods. The gradient step of \algo was set to the inverse of the Lipschitz constant of each subproblem \eqref{eq:f vector thm}.

\paragraph{\algo with known dictionary.}
First, we evaluated the performance of the different algorithms   on the $\calZ-$step -- i.e. the reconstruction of the activation tensors using a known dictionary -- for different values of the noise ratio, and  rank constraint $R$ (for \algo). Each method was run with five different random initializations, and the one returning the best result was kept. We measured the performance using the Root Mean Square Error (RMSE) between the true values of the activations $\calZ$ and their estimated values. The results are reported in Figure \ref{fig:rmse_csc_comparison} (left). First, note that if the rank is underestimated ($R=1$), then \algo does not learn a suitable estimation of the activations. However, interestingly, if the rank is overestimated ($R=3$ or $4$), then \algo only suffers from a small drop in performance. These two behavior are in line with previous results on low rank tensors estimation \citep{zhou2013tensor}. We also see that \algo (with $R>1$) performs significantly better than its counterparts without rank constraints, particularly in presence of strong noise, justifying the low rank activation approach. Finally, note that despite using different initializations, \algo consistently learns a good estimator of $\calZ$, resulting in a low standard deviation.

\paragraph{\algo with unknown dictionary.} 
We then compared the performance of the aforementioned algorithms on the complete K-CSC problem, which includes the simultaneous learning of the activations ($\calZ$-step) and the atoms ($\calD$-step), on signals with high level of noise.
Figure \ref{fig:rmse_csc_comparison}(right) shows the average time until convergence (i.e. until the relative convergence tolerance becomes lower than $1e-4$ \citep{boyd2011distributed}).
While it is important to remark that the relative speeds of each methods are dependent of their choice of hyperparameters as well as on the sparsity of the signals, we observe that A) \algo with the optimizations discussed in Section \ref{sec: model estimation} is significantly faster than its regular counterpart and B) that \algo is faster than FCSC-ShM and ConvFista, even if the advantage decreases as $R$ increases.
This is in line with the time complexity of each algorithm, as presented in Table \ref{tab:complexity_csct}.

\begin{figure}
    \centering
    \begin{minipage}[t]{.39\linewidth}
        \centering
        \includegraphics[width=\linewidth]{./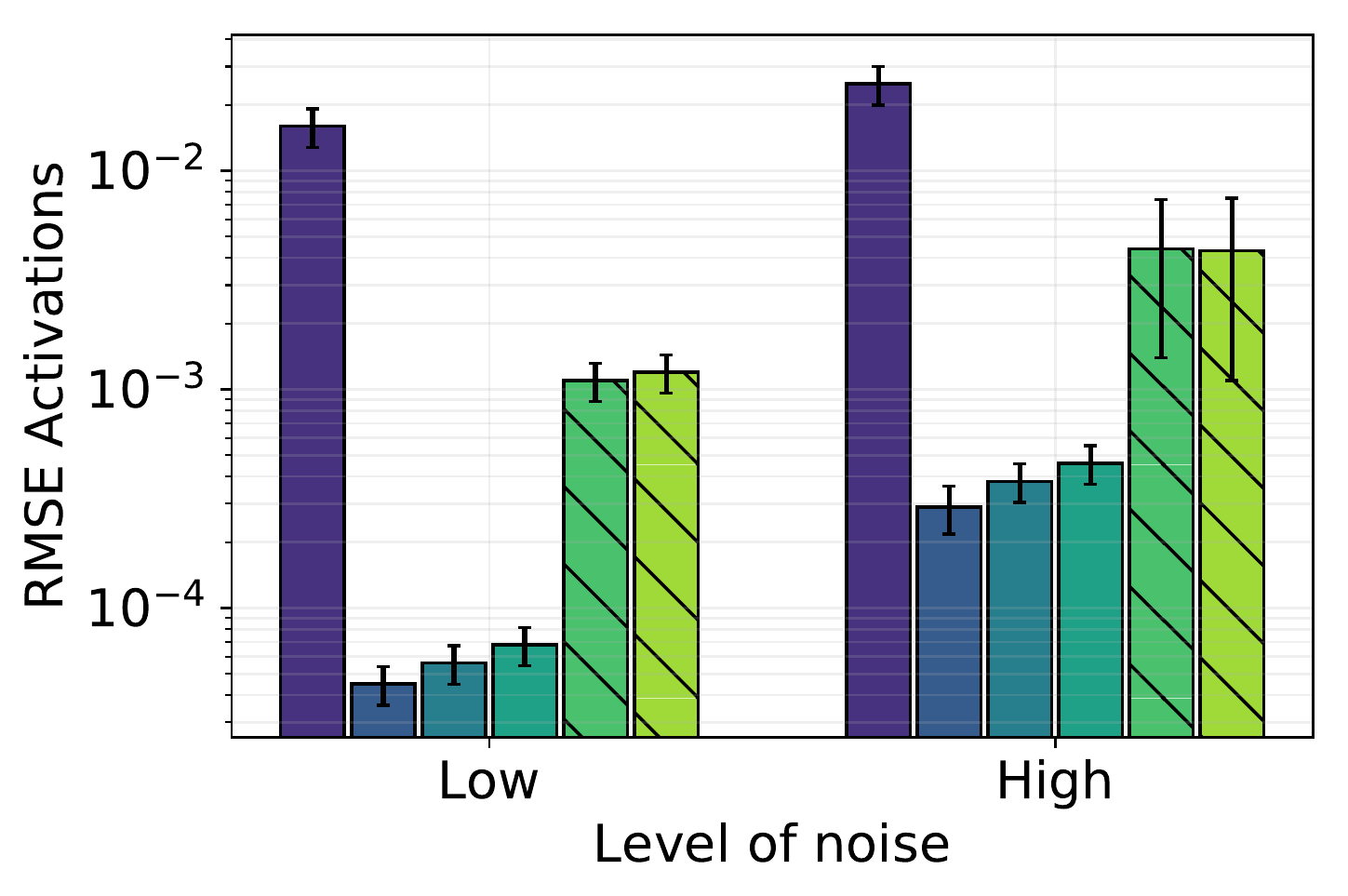}\\
        {\small (a)\;}
    \end{minipage}
    \begin{minipage}[t]{.39\linewidth}
        \centering
        \includegraphics[width=\linewidth]{./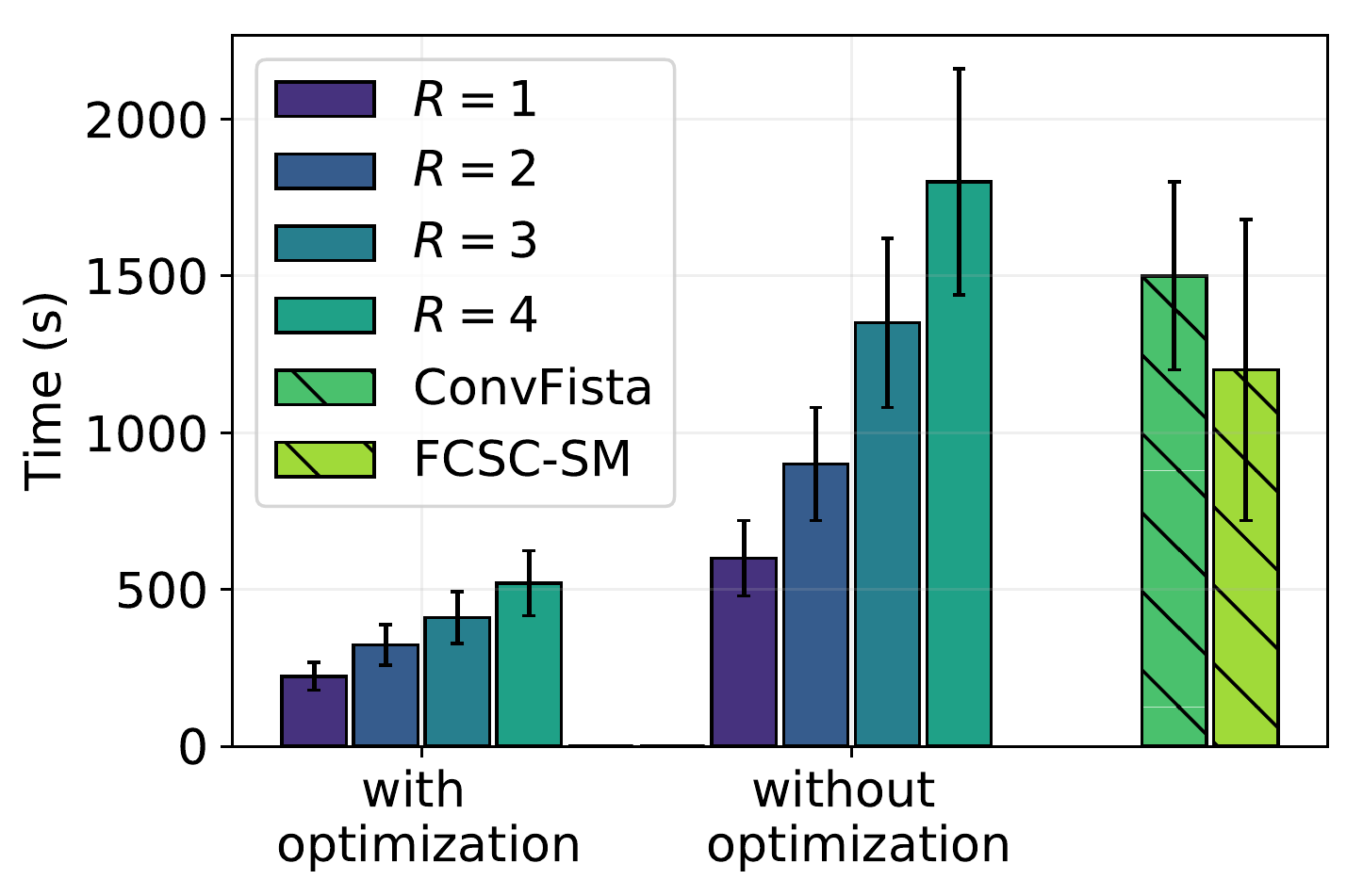}\\
        {\small (b)\;}
    \end{minipage}
    \caption{\textbf{Left}: Average RMSE (log-scale) between true and reconstructed activation tensors ($\calZ$-step) on the synthetic dataset. Low (resp. High) noise denotes a SNR of 25 (resp. 10). \textbf{Right}: Time until convergence of the dictionary learning process  ($\calZ$ + $\calD$ steps), with and without the optimizations discussed in Section \ref{sec: model estimation}. The standard deviation are indicated using black lines.}
    \label{fig:rmse_csc_comparison}
\end{figure}

\subsection{Multichannel EEG signals during a General Anesthesia}
In this experiment, we study multichannel EEG signals during a General Anesthesia induced by propofol and maintained by sevoflurane. As this setting is well-studied, the expected behavior of EEG is known, and could thus validate our approach in order to use it in other less studied configurations.

\paragraph{Dataset.} We now apply \algo to the real dataset of $32$-channel  EEG signals recorded at $250$ Hz during a General Anesthesia (GA) issued from \citep{dubost2019selection}. Compared to usual MEG datasets (see e.g. \citep{gramfort2013meg}), signals exhibit low SNR, impulsive noise and artefacts \citep{dubost2019selection}.  We crop the signal to focus on the Recovery of Consciousness (RoC), a crucial and difficult part of GA \cite{purdon2013electroencephalogram}, for a resulting duration of $1000$ seconds per signal. Following usual EEG preprocessing, the signals are filtered using a bandpass filter between $1$ and $20$Hz, and then transformed using a STFT with window size equals to $1024$ samples and $50$\% overlap (bins of size $0.17$Hz), resulting in a STF representation \citep{miwakeichi2004decomposing, morup2006parallel, becker2010multiway, becker2014eeg, becker2015brain, zhao2011multilinear}. We obtain a tensor $\calY$ of size $(32 \times 82 \times 490)$.
During a GA, patients are known to be static and EEG signals present a small number of patterns.
As a consequence, we set $R = 3$, and learn $K = 4$ atoms of size ($ 1 \times 15 \times 5$) corresponding to time-frequency atoms covering $8.19$ seconds and a band of frequencies of $3.42$Hz.
Hyperparameters are chosen similarly to the Section \ref{sec:xp-synt}. Finally, as previously mention, we add a non-negative constraint on the activations to fit the non-negativeness of spectrograms.

\paragraph{Results: $\boldsymbol{\alpha}$ and $\boldsymbol{\delta}$ waves.}
We present in Figure \ref{fig:eeg_atom_small} three atoms\footnote{In our experiments on this dataset, these atoms are consistently learnt by \algo, while the fourth atom varies and does not appear to encode any meaningful information.} learnt by \algo, together with their partial activations.
First, it is interesting to note that the learnt activations of the first two atoms  (resp. atom 3) are rank $2$ (resp. 1) instead of $3$ -- the actual constraint.
This illustrates the robustness of the \algo representation  with respect to mildly overestimated optimal rank-- which may be explained by the combination of rank constraint and sparsity inducing penalty.
Second, due to resulting low rank, these activations can be interpreted by looking at their behavior along the different modes (channel $\times$ frequency $\times$ time), as represented in  Figure \ref{fig:eeg_atom_small}. 
Hence, by studying the frequency activations (mode $2$), we see that the first two atoms encodes $\alpha$ and $\delta$-waves, two important frequencies that occur during a GA induced by propofol \cite{purdon2013electroencephalogram}.
Similarly, we observe that their time activations (mode $3$) are decreasing with
a behavior that was suggested by previous work during the RoC phase \citep{purdon2013electroencephalogram}.

\begin{figure}[t]
	\centering
	\includegraphics[width=.82\linewidth]{./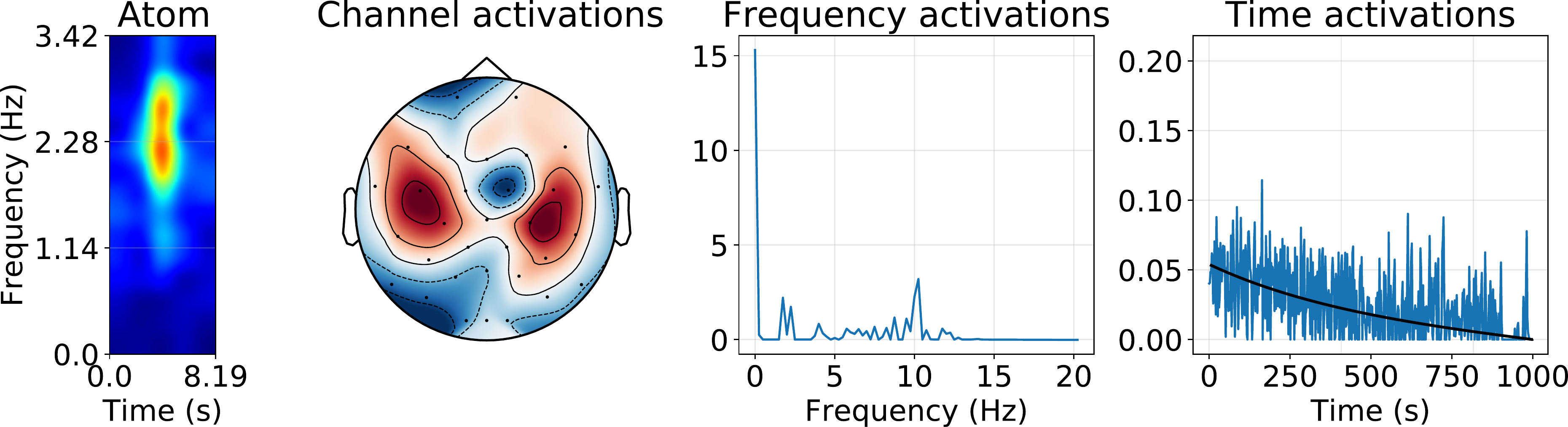}
	\includegraphics[width=.82\linewidth]{./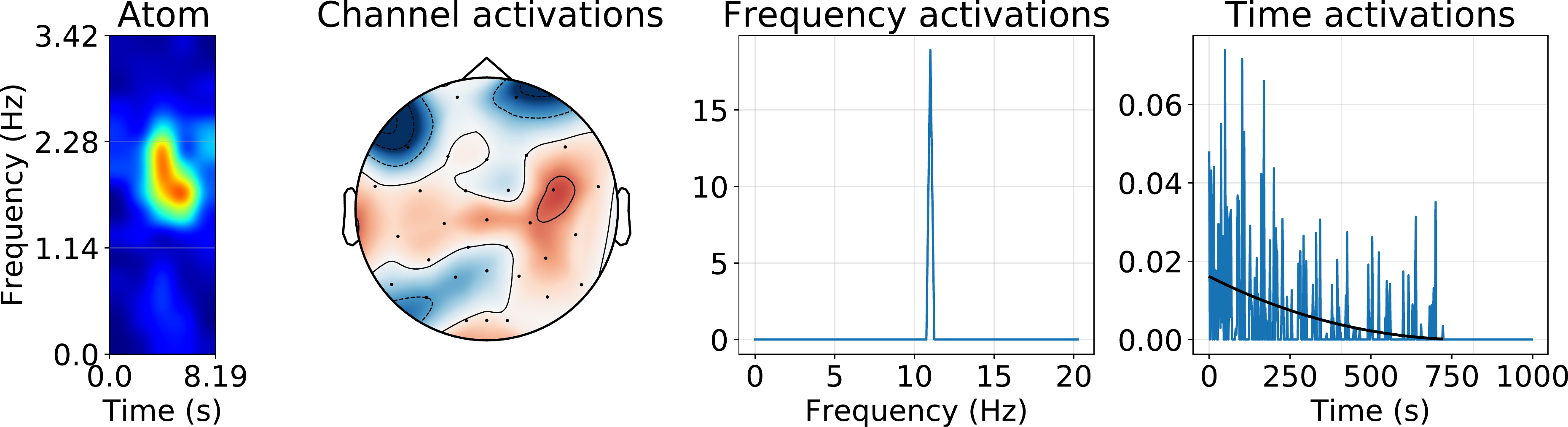}
	\includegraphics[width=.82\linewidth]{./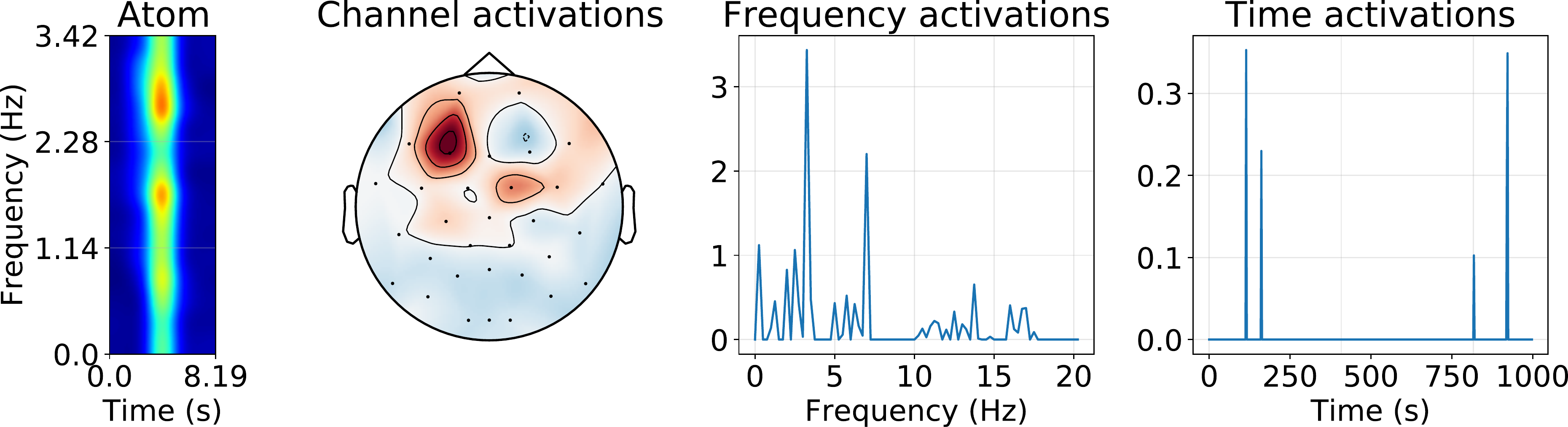}
	\caption{Three atoms of interest with their partial activations. From left to right: the time-frequency atom, the channel activations (mode $1$) on the scalp draw with MNE \citep{gramfort2014mne}, the frequency activations (mode $2$), and the time activations (mode $3$).}
	\label{fig:eeg_atom_small}
\end{figure}

\paragraph{Detecting impulsive noise.}
Interestingly, the third atom is of completely different nature, as it spans a wide range of frequencies. This atom may be seen as an encoding of impulsive noise, as its activations are mostly concentrated around one channel (mode $1$).
By using the activation of the third atom learnt by \algo, this impulsive noise can be removed from the final reconstruction by vanishing its contribution, highlighting the robust behavior of \algo.

\paragraph{Robustness to noise and reconstruction.}
By studying the channel activations (mode $1$) of atoms 1 and 2, we observe the presence of  three deficient channels: $10$ (CP1), $21$ (CP2), and $28$ (F4). This is coherent with clinical context, as these channels are located in the frontal and central area -- where individual anatomic variations can explain a contact defect -- resulting in sensors recording only noise at these positions. 
Interestingly, due to the low-rank constraint, our model  is robust to this problem  -- as it assumes  string correlation between the channels -- and is able to reconstruct the missing or bad signal. This property emphasizing the robust nature of \algo with respect to unstructured noise and sensor defect.
\section{Conclusion}
In this paper, we developed a tensor based CSC framework, named Kruskal CSC, where signals, atoms and activations are assumed to be tensors, but activations are constrained to have a low CP-rank. 
We proposed an efficient optimisation strategy to solve this framework, named \algo,  that entails a reduced complexity and lower computation time. 
We experimentally evaluated our method, and showed that our  \algo is significantly more robust to noise and artefacts than its unconstrained counterparts, and as a result is able to produce a sparse, accurate and interpretable decomposition of noisy tensor signals such as EEG spectrograms recorded during propofol induced GA. 
Future work may include the use of \algo to other types of GA, as well as EEG recordings, or other types of tensor based signals such as multi-source music.

\section*{Acknowledgement}
The authors would like to thank the Service d'Anesthésie-Réanimation of the Hôpital d'Instruction des Armées Bégin and especially Dr. Clément Dubost for the access to the data, the fruitful discussions, and interpretations of the recovered patterns.

\balance
\bibliographystyle{abbrvnat}
\bibliography{biblio}

\begin{thebibliography}{55}
\providecommand{\natexlab}[1]{#1}
\providecommand{\url}[1]{\texttt{#1}}
\expandafter\ifx\csname urlstyle\endcsname\relax
  \providecommand{\doi}[1]{doi: #1}\else
  \providecommand{\doi}{doi: \begingroup \urlstyle{rm}\Url}\fi

\bibitem[Acar et~al.(2011)Acar, Dunlavy, and Kolda]{acar2011scalable}
E.~Acar, D.~M. Dunlavy, and T.~G. Kolda.
\newblock A scalable optimization approach for fitting canonical tensor
  decompositions.
\newblock \emph{Journal of Chemometrics}, 25\penalty0 (2):\penalty0 67--86,
  2011.

\bibitem[Barth{\'e}lemy et~al.(2012)Barth{\'e}lemy, Larue, Mayoue, Mercier, and
  Mars]{barthelemy2012shift}
Q.~Barth{\'e}lemy, A.~Larue, A.~Mayoue, D.~Mercier, and J.~I. Mars.
\newblock Shift \& 2d rotation invariant sparse coding for multivariate
  signals.
\newblock \emph{IEEE Transactions on Signal Processing}, 60\penalty0
  (4):\penalty0 1597--1611, 2012.

\bibitem[Beck and Teboulle(2009)]{beck2009fast}
A.~Beck and M.~Teboulle.
\newblock A fast iterative shrinkage-thresholding algorithm for linear inverse
  problems.
\newblock \emph{SIAM journal on Imaging Sciences}, 2\penalty0 (1):\penalty0
  183--202, 2009.

\bibitem[Becker et~al.(2010)Becker, Comon, Albera, Haardt, and
  Merlet]{becker2010multiway}
H.~Becker, P.~Comon, L.~Albera, M.~Haardt, and I.~Merlet.
\newblock Multiway space-time-wave-vector analysis for source localization and
  extraction.
\newblock In \emph{2010 18th European Signal Processing Conference}, pages
  1349--1353. IEEE, 2010.

\bibitem[Becker et~al.(2014)Becker, Albera, Comon, Haardt, Birot, Wendling,
  Gavaret, B{\'e}nar, and Merlet]{becker2014eeg}
H.~Becker, L.~Albera, P.~Comon, M.~Haardt, G.~Birot, F.~Wendling, M.~Gavaret,
  C.~G. B{\'e}nar, and I.~Merlet.
\newblock Eeg extended source localization: tensor-based vs. conventional
  methods.
\newblock \emph{NeuroImage}, 96:\penalty0 143--157, 2014.

\bibitem[Becker et~al.(2015)Becker, Albera, Comon, Gribonval, Wendling, and
  Merlet]{becker2015brain}
H.~Becker, L.~Albera, P.~Comon, R.~Gribonval, F.~Wendling, and I.~Merlet.
\newblock Brain-source imaging: From sparse to tensor models.
\newblock \emph{IEEE Signal Processing Magazine}, 32\penalty0 (6):\penalty0
  100--112, 2015.

\bibitem[Boyd et~al.(2011)Boyd, Parikh, Chu, Peleato, Eckstein,
  et~al.]{boyd2011distributed}
S.~Boyd, N.~Parikh, E.~Chu, B.~Peleato, J.~Eckstein, et~al.
\newblock Distributed optimization and statistical learning via the alternating
  direction method of multipliers.
\newblock \emph{Foundations and Trends{\textregistered} in Machine learning},
  3\penalty0 (1):\penalty0 1--122, 2011.

\bibitem[Bristow et~al.(2013)Bristow, Eriksson, and Lucey]{bristow2013fast}
H.~Bristow, A.~Eriksson, and S.~Lucey.
\newblock Fast convolutional sparse coding.
\newblock In \emph{Proceedings of the IEEE Conference on Computer Vision and
  Pattern Recognition}, pages 391--398, 2013.

\bibitem[Brown et~al.(2010)Brown, Lydic, and Schiff]{brown2010general}
E.~N. Brown, R.~Lydic, and N.~D. Schiff.
\newblock General anesthesia, sleep, and coma.
\newblock \emph{New England Journal of Medicine}, 363\penalty0 (27):\penalty0
  2638--2650, 2010.

\bibitem[Chalasani et~al.(2013)Chalasani, Principe, and
  Ramakrishnan]{chalasani2013fast}
R.~Chalasani, J.~C. Principe, and N.~Ramakrishnan.
\newblock A fast proximal method for convolutional sparse coding.
\newblock In \emph{The 2013 International Joint Conference on Neural Networks
  (IJCNN)}, pages 1--5. IEEE, 2013.

\bibitem[Cong et~al.(2015)Cong, Lin, Kuang, Gong, Astikainen, and
  Ristaniemi]{cong2015tensor}
F.~Cong, Q.-H. Lin, L.-D. Kuang, X.-F. Gong, P.~Astikainen, and T.~Ristaniemi.
\newblock Tensor decomposition of eeg signals: a brief review.
\newblock \emph{Journal of Neuroscience Methods}, 248:\penalty0 59--69, 2015.

\bibitem[Dubost et~al.(2019)Dubost, Humbert, Benizri, Tourtier, Vayatis, and
  Vidal]{dubost2019selection}
C.~Dubost, P.~Humbert, A.~Benizri, J.-P. Tourtier, N.~Vayatis, and P.-P. Vidal.
\newblock Selection of the best electroencephalogram channel to predict the
  depth of anesthesia.
\newblock \emph{Frontiers in Computational Neuroscience}, 13, 2019.

\bibitem[Dudgeon(1983)]{dudgeon1983multidimensional}
D.~E. Dudgeon.
\newblock Multidimensional digital signal processing.
\newblock \emph{Engewood Cliffs}, 1983.

\bibitem[Fritz et~al.(2016)Fritz, Kalarickal, Maybrier, Muench, Dearth, Chen,
  Escallier, Abdallah, Lin, and Avidan]{fritz2016intraoperative}
B.~A. Fritz, P.~L. Kalarickal, H.~R. Maybrier, M.~R. Muench, D.~Dearth,
  Y.~Chen, K.~E. Escallier, A.~B. Abdallah, N.~Lin, and M.~S. Avidan.
\newblock Intraoperative electroencephalogram suppression predicts
  postoperative delirium.
\newblock \emph{Anesthesia and analgesia}, 122\penalty0 (1):\penalty0 234,
  2016.

\bibitem[Garcia-Cardona and
  Wohlberg(2018{\natexlab{a}})]{garcia2018convolutional}
C.~Garcia-Cardona and B.~Wohlberg.
\newblock Convolutional dictionary learning: A comparative review and new
  algorithms.
\newblock \emph{IEEE Transactions on Computational Imaging}, 4\penalty0
  (3):\penalty0 366--381, 2018{\natexlab{a}}.

\bibitem[Garcia-Cardona and
  Wohlberg(2018{\natexlab{b}})]{garcia2018convolutional2}
C.~Garcia-Cardona and B.~Wohlberg.
\newblock Convolutional dictionary learning for multi-channel signals.
\newblock In \emph{2018 52nd Asilomar Conference on Signals, Systems, and
  Computers}, pages 335--342. IEEE, 2018{\natexlab{b}}.

\bibitem[Glowinski and Marroco(1975)]{glowinski1975approximation}
R.~Glowinski and A.~Marroco.
\newblock Sur l'approximation, par {\'e}l{\'e}ments finis d'ordre un, et la
  r{\'e}solution, par p{\'e}nalisation-dualit{\'e} d'une classe de
  probl{\`e}mes de dirichlet non lin{\'e}aires.
\newblock \emph{Revue Française d’Automatique, Informatique, et Recherche
  Op{\'e}rationelle}, 9\penalty0 (R2):\penalty0 41--76, 1975.

\bibitem[Gramfort et~al.(2013)Gramfort, Luessi, Larson, Engemann, Strohmeier,
  Brodbeck, Goj, Jas, Brooks, Parkkonen, et~al.]{gramfort2013meg}
A.~Gramfort, M.~Luessi, E.~Larson, D.~A. Engemann, D.~Strohmeier, C.~Brodbeck,
  R.~Goj, M.~Jas, T.~Brooks, L.~Parkkonen, et~al.
\newblock Meg and eeg data analysis with mne-python.
\newblock \emph{Frontiers in Neuroscience}, 7:\penalty0 267, 2013.

\bibitem[Gramfort et~al.(2014)Gramfort, Luessi, Larson, Engemann, Strohmeier,
  Brodbeck, Parkkonen, and H{\"a}m{\"a}l{\"a}inen]{gramfort2014mne}
A.~Gramfort, M.~Luessi, E.~Larson, D.~A. Engemann, D.~Strohmeier, C.~Brodbeck,
  L.~Parkkonen, and M.~S. H{\"a}m{\"a}l{\"a}inen.
\newblock Mne software for processing meg and eeg data.
\newblock \emph{Neuroimage}, 86:\penalty0 446--460, 2014.

\bibitem[He et~al.(2018)He, Chen, Xu, Zhou, and Wang]{he2018boosted}
L.~He, K.~Chen, W.~Xu, J.~Zhou, and F.~Wang.
\newblock Boosted sparse and low-rank tensor regression.
\newblock In \emph{Advances in Neural Information Processing Systems}, pages
  1009--1018, 2018.

\bibitem[Hildreth(1957)]{hildreth1957quadratic}
C.~Hildreth.
\newblock A quadratic programming procedure.
\newblock \emph{Naval research logistics quarterly}, 4\penalty0 (1):\penalty0
  79--85, 1957.

\bibitem[Humbert et~al.(2020)Humbert, Audiffren, Oudre, and
  Vayatis]{humbert2020low}
P.~Humbert, J.~Audiffren, L.~Oudre, and N.~Vayatis.
\newblock Low rank activations for tensor-based convolutional sparse coding.
\newblock In \emph{ICASSP 2020-2020 IEEE International Conference on Acoustics,
  Speech and Signal Processing (ICASSP)}, pages 3252--3256. IEEE, 2020.

\bibitem[Jas et~al.(2017)Jas, La~Tour, Simsekli, and Gramfort]{jas2017learning}
M.~Jas, T.~D. La~Tour, U.~Simsekli, and A.~Gramfort.
\newblock Learning the morphology of brain signals using alpha-stable
  convolutional sparse coding.
\newblock In \emph{Advances in Neural Information Processing Systems}, pages
  1099--1108, 2017.

\bibitem[Karvat et~al.(2020)Karvat, Schneider, Alyahyay, Steenbergen,
  Tangermann, and Diester]{karvat2020real}
G.~Karvat, A.~Schneider, M.~Alyahyay, F.~Steenbergen, M.~Tangermann, and
  I.~Diester.
\newblock Real-time detection of neural oscillation bursts allows behaviourally
  relevant neurofeedback.
\newblock \emph{Communications biology}, 3\penalty0 (1):\penalty0 1--10, 2020.

\bibitem[Kolda and Bader(2009)]{kolda2009tensor}
T.~G. Kolda and B.~W. Bader.
\newblock Tensor decompositions and applications.
\newblock \emph{SIAM review}, 51\penalty0 (3):\penalty0 455--500, 2009.

\bibitem[Kossaifi et~al.(2019)Kossaifi, Panagakis, Anandkumar, and
  Pantic]{kossaifi2019tensorly}
J.~Kossaifi, Y.~Panagakis, A.~Anandkumar, and M.~Pantic.
\newblock Tensorly: Tensor learning in python.
\newblock \emph{The Journal of Machine Learning Research}, 20\penalty0
  (1):\penalty0 925--930, 2019.

\bibitem[Kruskal(1989)]{kruskal1989rank}
J.~B. Kruskal.
\newblock Rank, decomposition, and uniqueness for 3-way and n-way arrays.
\newblock \emph{Multiway data analysis}, pages 7--18, 1989.

\bibitem[La~Tour et~al.(2018)La~Tour, Moreau, Jas, and
  Gramfort]{la2018multivariate}
T.~D. La~Tour, T.~Moreau, M.~Jas, and A.~Gramfort.
\newblock Multivariate convolutional sparse coding for electromagnetic brain
  signals.
\newblock In \emph{Advances in Neural Information Processing Systems}, pages
  3292--3302, 2018.

\bibitem[Liu et~al.(2018)Liu, Garcia-Cardona, Wohlberg, and Yin]{liu2018first}
J.~Liu, C.~Garcia-Cardona, B.~Wohlberg, and W.~Yin.
\newblock First-and second-order methods for online convolutional dictionary
  learning.
\newblock \emph{SIAM Journal on Imaging Sciences}, 11\penalty0 (2):\penalty0
  1589--1628, 2018.

\bibitem[Liu and Rinehart(2016)]{liu2016closed}
N.~Liu and J.~Rinehart.
\newblock Closed-loop propofol administration: routine care or a research tool?
  what impact in the future?, 2016.

\bibitem[Mairal et~al.(2009)Mairal, Bach, Ponce, and Sapiro]{mairal2009online}
J.~Mairal, F.~Bach, J.~Ponce, and G.~Sapiro.
\newblock Online dictionary learning for sparse coding.
\newblock In \emph{Proceedings of the 26th annual international conference on
  machine learning}, pages 689--696, 2009.

\bibitem[Mairal et~al.(2010)Mairal, Bach, Ponce, and Sapiro]{mairal2010online}
J.~Mairal, F.~Bach, J.~Ponce, and G.~Sapiro.
\newblock Online learning for matrix factorization and sparse coding.
\newblock \emph{Journal of Machine Learning Research}, 11\penalty0
  (Jan):\penalty0 19--60, 2010.

\bibitem[Miwakeichi et~al.(2004)Miwakeichi, Mart{\i}nez-Montes,
  Vald{\'e}s-Sosa, Nishiyama, Mizuhara, and
  Yamaguchi]{miwakeichi2004decomposing}
F.~Miwakeichi, E.~Mart{\i}nez-Montes, P.~A. Vald{\'e}s-Sosa, N.~Nishiyama,
  H.~Mizuhara, and Y.~Yamaguchi.
\newblock Decomposing eeg data into space--time--frequency components using
  parallel factor analysis.
\newblock \emph{NeuroImage}, 22\penalty0 (3):\penalty0 1035--1045, 2004.

\bibitem[M{\o}rup et~al.(2006)M{\o}rup, Hansen, Herrmann, Parnas, and
  Arnfred]{morup2006parallel}
M.~M{\o}rup, L.~K. Hansen, C.~S. Herrmann, J.~Parnas, and S.~M. Arnfred.
\newblock Parallel factor analysis as an exploratory tool for wavelet
  transformed event-related eeg.
\newblock \emph{NeuroImage}, 29\penalty0 (3):\penalty0 938--947, 2006.

\bibitem[Nikolova and Tan(2017)]{nikolova2017alternating}
M.~Nikolova and P.~Tan.
\newblock Alternating proximal gradient descent for nonconvex regularised
  problems with multiconvex coupling terms.
\newblock 2017.

\bibitem[Paatero(1997)]{paatero1997weighted}
P.~Paatero.
\newblock A weighted non-negative least squares algorithm for three-way
  ‘parafac’factor analysis.
\newblock \emph{Chemometrics and Intelligent Laboratory Systems}, 38\penalty0
  (2):\penalty0 223--242, 1997.

\bibitem[Parikh et~al.(2014)Parikh, Boyd, et~al.]{parikh2014proximal}
N.~Parikh, S.~Boyd, et~al.
\newblock Proximal algorithms.
\newblock \emph{Foundations and Trends{\textregistered} in Optimization},
  1\penalty0 (3):\penalty0 127--239, 2014.

\bibitem[Phan et~al.(2015)Phan, Tichavsk{\`y}, and Cichocki]{phan2015low}
A.-H. Phan, P.~Tichavsk{\`y}, and A.~Cichocki.
\newblock Low rank tensor deconvolution.
\newblock In \emph{2015 IEEE International Conference on Acoustics, Speech and
  Signal Processing (ICASSP)}, pages 2169--2173. IEEE, 2015.

\bibitem[Punjasawadwong et~al.(2018)Punjasawadwong, Chau-in, Laopaiboon,
  Punjasawadwong, and Pin-on]{punjasawadwong2018processed}
Y.~Punjasawadwong, W.~Chau-in, M.~Laopaiboon, S.~Punjasawadwong, and P.~Pin-on.
\newblock Processed electroencephalogram and evoked potential techniques for
  amelioration of postoperative delirium and cognitive dysfunction following
  non-cardiac and non-neurosurgical procedures in adults.
\newblock \emph{Cochrane Database of Systematic Reviews}, \penalty0 (5), 2018.

\bibitem[Purdon et~al.(2013)Purdon, Pierce, Mukamel, Prerau, Walsh, Wong,
  Salazar-Gomez, Harrell, Sampson, Cimenser,
  et~al.]{purdon2013electroencephalogram}
P.~L. Purdon, E.~T. Pierce, E.~A. Mukamel, M.~J. Prerau, J.~L. Walsh, K.~F.~K.
  Wong, A.~F. Salazar-Gomez, P.~G. Harrell, A.~L. Sampson, A.~Cimenser, et~al.
\newblock Electroencephalogram signatures of loss and recovery of consciousness
  from propofol.
\newblock \emph{Proceedings of the National Academy of Sciences}, 110\penalty0
  (12):\penalty0 E1142--E1151, 2013.

\bibitem[Purdon et~al.(2015)Purdon, Sampson, Pavone, and
  Brown]{purdon2015clinical}
P.~L. Purdon, A.~Sampson, K.~J. Pavone, and E.~N. Brown.
\newblock Clinical electroencephalography for anesthesiologistspart i:
  background and basic signatures.
\newblock \emph{Anesthesiology: The Journal of the American Society of
  Anesthesiologists}, 123\penalty0 (4):\penalty0 937--960, 2015.

\bibitem[Quesada et~al.(2018)Quesada, Rodriguez, and
  Wohlberg]{quesada2018separable}
J.~Quesada, P.~Rodriguez, and B.~Wohlberg.
\newblock Separable dictionary learning for convolutional sparse coding via
  split updates.
\newblock In \emph{2018 IEEE International Conference on Acoustics, Speech and
  Signal Processing (ICASSP)}, pages 4094--4098. IEEE, 2018.

\bibitem[Quesada et~al.(2019)Quesada, Silva, Rodriguez, and
  Wohlberg]{quesada2019combinatorial}
J.~Quesada, G.~Silva, P.~Rodriguez, and B.~Wohlberg.
\newblock Combinatorial separable convolutional dictionaries.
\newblock In \emph{2019 XXII Symposium on Image, Signal Processing and
  Artificial Vision (STSIVA)}, pages 1--5. IEEE, 2019.

\bibitem[Rabusseau and Kadri(2016)]{rabusseau2016low}
G.~Rabusseau and H.~Kadri.
\newblock Low-rank regression with tensor responses.
\newblock In \emph{Advances in Neural Information Processing Systems}, pages
  1867--1875, 2016.

\bibitem[Rigamonti et~al.(2013)Rigamonti, Sironi, Lepetit, and
  Fua]{rigamonti2013learning}
R.~Rigamonti, A.~Sironi, V.~Lepetit, and P.~Fua.
\newblock Learning separable filters.
\newblock In \emph{Proceedings of the IEEE conference on computer vision and
  pattern recognition}, pages 2754--2761, 2013.

\bibitem[Silva et~al.(2018)Silva, Quesada, and
  Rodr{\'\i}guez]{silva2018efficient}
G.~Silva, J.~Quesada, and P.~Rodr{\'\i}guez.
\newblock Efficient separable filter estimation using rank-1 convolutional
  dictionary learning.
\newblock In \emph{2018 IEEE 28th International Workshop on Machine Learning
  for Signal Processing (MLSP)}, pages 1--6. IEEE, 2018.

\bibitem[Simon and Elad(2019)]{simon2019rethinking}
D.~Simon and M.~Elad.
\newblock Rethinking the csc model for natural images.
\newblock In \emph{Advances in Neural Information Processing Systems}, pages
  2271--2281, 2019.

\bibitem[Sironi et~al.(2014)Sironi, Tekin, Rigamonti, Lepetit, and
  Fua]{sironi2014learning}
A.~Sironi, B.~Tekin, R.~Rigamonti, V.~Lepetit, and P.~Fua.
\newblock Learning separable filters.
\newblock \emph{IEEE transactions on pattern analysis and machine
  intelligence}, 37\penalty0 (1):\penalty0 94--106, 2014.

\bibitem[van Ede et~al.(2018)van Ede, Quinn, Woolrich, and
  Nobre]{van2018neural}
F.~van Ede, A.~J. Quinn, M.~W. Woolrich, and A.~C. Nobre.
\newblock Neural oscillations: sustained rhythms or transient burst-events?
\newblock \emph{Trends in Neurosciences}, 41\penalty0 (7):\penalty0 415--417,
  2018.

\bibitem[{Wang} et~al.(2020){Wang}, {Kwok}, and {Ni}]{9049336}
Y.~{Wang}, J.~T. {Kwok}, and L.~M. {Ni}.
\newblock Generalized convolutional sparse coding with unknown noise.
\newblock \emph{IEEE Transactions on Image Processing}, 29:\penalty0
  5386--5395, 2020.

\bibitem[Wohlberg(2015)]{wohlberg2015efficient}
B.~Wohlberg.
\newblock Efficient algorithms for convolutional sparse representations.
\newblock \emph{IEEE Transactions on Image Processing}, 25\penalty0
  (1):\penalty0 301--315, 2015.

\bibitem[Wohlberg(2016)]{wohlberg2016convolutional}
B.~Wohlberg.
\newblock Convolutional sparse representation of color images.
\newblock In \emph{2016 IEEE Southwest Symposium on Image Analysis and
  Interpretation (SSIAI)}, pages 57--60. IEEE, 2016.

\bibitem[Wohlberg(2017)]{wohlberg-2017-sporco}
B.~Wohlberg.
\newblock {SPORCO}: {A} {P}ython package for standard and convolutional sparse
  representations.
\newblock In \emph{Proceedings of the 15th Python in Science Conference}, pages
  1--8, Austin, TX, USA, July 2017.
\newblock \doi{10.25080/shinma-7f4c6e7-001}.

\bibitem[Zhao et~al.(2011)Zhao, Caiafa, Mandic, Zhang, Ball, Schulze-Bonhage,
  and Cichocki]{zhao2011multilinear}
Q.~Zhao, C.~F. Caiafa, D.~P. Mandic, L.~Zhang, T.~Ball, A.~Schulze-Bonhage, and
  A.~S. Cichocki.
\newblock Multilinear subspace regression: An orthogonal tensor decomposition
  approach.
\newblock In \emph{Advances in Neural Information Processing Systems}, pages
  1269--1277, 2011.

\bibitem[Zhou et~al.(2013)Zhou, Li, and Zhu]{zhou2013tensor}
H.~Zhou, L.~Li, and H.~Zhu.
\newblock Tensor regression with applications in neuroimaging data analysis.
\newblock \emph{Journal of the American Statistical Association}, 108\penalty0
  (502):\penalty0 540--552, 2013.

\end{thebibliography}

\newpage
\appendix

\begin{center}
	\large
	APPENDIX
\end{center}

\section{Pseudo code and implementation}
The  pseudo-code of  the $\calZ$-step of \algo for one mode can be found in Algorithm \ref{algo:TCFISTA}. The implementation of \algo using Python is available in the file \texttt{ \url{code/TC_FISTA.py}}, together with notebooks that provide illustrations of the experiments.

\begin{algorithm}[H]
	\caption{\algo (sub-problem)}
	\begin{algorithmic}
        \State\textbf{Input:} signal $\calY$, dictionary $\calD_1, \cdots, \calD_K$, regularization and step parameters $\alpha, \beta$, $\eta$ ($\eta = 1/L$, the inverse of Lipschitz constant if calculate), tolerance $\varepsilon$
		\State\textbf{Initialization:  $\bZ^{(0)}$}
        \State \State\textbf{Precompute:} $\widehat{\calY}$, $\{\widehat{\bD}_k\}$, $\bG$ and $(\widehat{\bA} \otimes \boldI)\widehat{\by}^{(q)}$
        \vspace{1em}
		
		\State $t^{(0)} \longleftarrow 1$
		
		\Repeat
		\vspace{0.5em}
		\State \texttt{$\rhd$ Update of $\bW$ via a proximal gradient step (ISTA)}
		\State Compute $\widehat{\bZ}^{(s)}$ using the DFT
		\State $\widehat{\bz}^{(s)} \longleftarrow \texttt{vec}(\widehat{\bZ}^{(s)})$
		\State $\widehat{\bw}^{(s + 1/2)} \longleftarrow \widehat{\bz}^{(s)} - \eta \left(\bG  \widehat{\bz}^{(s)} - (\widehat{\bA} \otimes \boldI)\widehat{\by}^{(q)}\right)$
		\State $\widehat{\bW}^{(s + 1/2)} \longleftarrow$ Matricization of $\widehat{\bw}^{(s + 1/2)}$
		\State Compute $\bW^{(s + 1/2)}$ using the IDFT
		
		\State \texttt{$\rhd$ Update of $\bW$ via a proximal step (ISTA)}
		\State $\bW^{(s+1)} \longleftarrow \text{prox}_{\eta, \alpha, \beta} \left(\bW^{(s+1/2)}_k\right)$
		
		\State \texttt{$\rhd$ Nesterov momentum step (FISTA)}
		\State $t^{(s + 1)} \longleftarrow \dfrac{1 + \sqrt{1 + 4 \cdot t^{(s)^2}}}{2}$
		\State $\bZ^{(s+1)} \longleftarrow \bW^{(s+1)} + \dfrac{t^{(s)} - 1}{t^{(s+1)} + 1} (\bW^{(s+1)} - \bW^{(s)})$

		\Until{$\lVert \bZ^{(s+1)} - \bZ^{(s)}  \rVert_\infty \leq \varepsilon $}
	\end{algorithmic}
	\label{algo:TCFISTA}
	\normalsize
\end{algorithm}

\section{Additional results on synthetic data}

To further illustrate and compare the effectiveness and efficiency of \algo, we consider in this section one additional (smaller) dataset and other noise levels.
We also provide additional information on the data sets considered in the main paper and their generation.

\noindent
For the convenience of the reader, we list here the CSC/CDL algorithms compared and the acronyms we use throughout this section: FISTA with tensor-based rank constraint (\algo), ADMM with iterative application of the Sherman-Morrison equation (FCSC-ShM) \citep{bristow2013fast, wohlberg2015efficient}, FISTA in the Fourier domain (ConvFISTA-FD) \citep{chalasani2013fast, wohlberg2015efficient}.

\begin{figure}[H]
	\centering
    \includegraphics[width=.8\linewidth]{./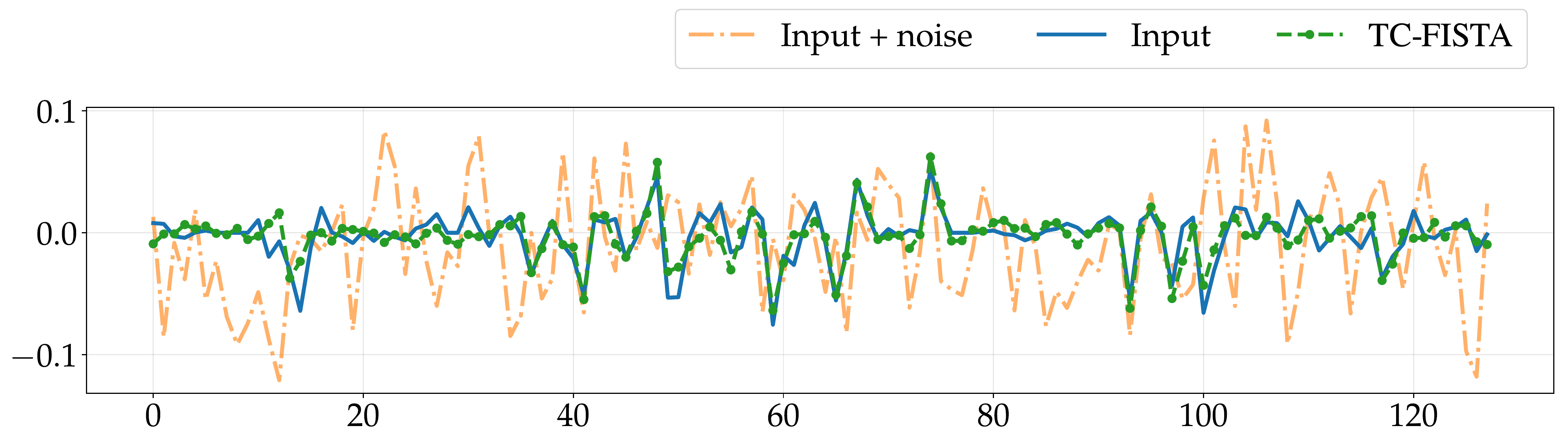}
    \includegraphics[width=.8\linewidth]{./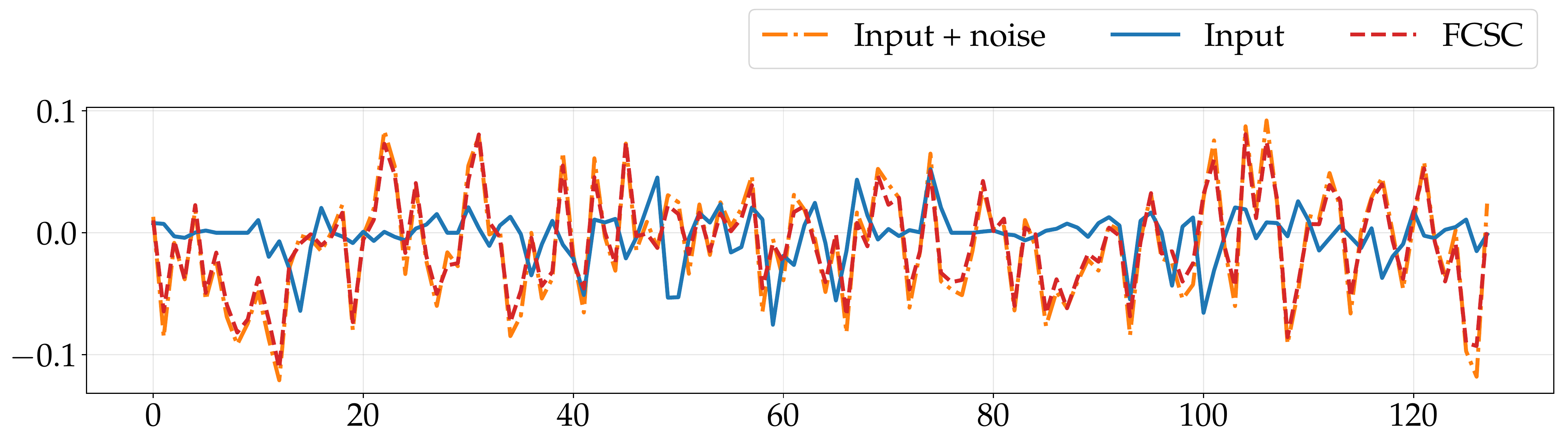}
	\caption{One tube of $3$-rd order (Top) input and reconstruction with \algo, and (Bottom) input + noise and reconstruction with FCSC-ShM. ConvFISTA-FD leads to a similar reconstruction.}
	\label{fig:csc_noisy_app}
\end{figure}

\subsection{Setup}
\paragraph{Dataset.} Small-scale and large-scale experiments are performed by considering two main different datasets

\begin{itemize}[leftmargin=*]
	\item A \textit{small-scale dataset} which contains $10$ independent input signals of size $(25 \times 25 \times 25)$. Each signal is generated as follows. We draw $K=3$ atoms of size $(5 \times 5 \times 5)$ according to an Uniform distribution with values in $[-1, 1]$ and normalize them. Then, we set the maximal CP-rank to $R^*\leq 2$ and drawn the sparse activations from a Bernoulli-Uniform distribution with Bernoulli parameter equals to $0.2$, and range of values in $[-1, 1]$. Finally, we generate the input tensor according to the convolutional model induced by the K-CDL model. Following \cite{wohlberg2015efficient},  we added a multivariate Gaussian noise to each signal to obtain a Signal to Nosie Ratio (SNR) of $25$dB, $10$dB, and $5$dB.
	\item A \textit{large-scale dataset} -- the one from the main paper -- which is generated as the small-scale dataset but with input signals of size $(128 \times 128 \times 128)$ and Bernoulli parameter equals to $0.02$.
\end{itemize}

Recall that the definition of the SNR between a signal $\by_{ref}$ and a comparison one $\by_{noisy} = \by_{ref} + \boldsymbol{\varepsilon}$ is
	\begin{equation*}
	 \text{SNR}(\by_{ref}, \by_{noisy}) = 10 \log_{10} \left(\dfrac{\text{Var}(\by_{ref})}{\text{MSE}(\by_{ref}, \by_{noisy})}\right) \; ,
	\end{equation*}
	 where MSE denotes the Mean Squared Error. SNR is an asymmetric decibel measurement (dB) used to compare the level of a signal to the level of background noise \citep{wohlberg2015efficient}.

\paragraph{Metrics.} We use four metrics to evaluate our methods
\begin{itemize}[leftmargin=*]
	\item The Root Mean Square Error (RMSE) between the true input signal (resp. the true activations) and the reconstruction (resp. the learnt activations). The lowest the better. We denote them RMSE$(\calY)$ and RMSE$(\calZ)$.
	\item The number of times, among all the initializations, our method reaches a ``correct'' minimizers, e.g. RMSE under $\varepsilon = 1. e{-6}$. This metric reflects the dependency of an algorithm to its initializations. The higher the better. We denoted them $\#\{$RMSE$(\calY) < \varepsilon\}$ and $\#\{$RMSE$(\calZ) < \varepsilon\}$ where $\varepsilon$ need to be set.
\end{itemize}

\subsection{Evaluation of the K-CSC}
On this additional experiment, we compare the performances of \algo on the CSC task and show the importance of the rank structure to get rid of the noise. The true dictionary is therefore given at the beginning. The $\{\bZ_{k, q}\}$ are initialized with random Uniform matrices.

\paragraph{Results with noise.}
For each input signal, we run \algo with five different initializations. The metric $\#\{$RMSE$(\cdot) < \varepsilon \}$ is therefore calculated on $50$ initializations. Each time, the reconstruction giving the lowest RMSE$(\calY)$ among the five tries is kept. We set $R = 2$ during all the experiments. \\

\noindent
Quantitative results are collected in Table \ref{tab:csc_res_noise_app}. The most remarkable result is that even under a very strong noise (SNR of $5$dB), \algo yields good reconstructions and is not too dependant on the initialization as shown by the two last metrics. Figure \ref{fig:csc_noisy_app} provides a visual example of this important property. We see that \algo reconstructs the input signal with high accuracy while FCSC-ShM completely fails and mostly overfits the noise. This was expected for \algo because the noise does not share the low-rank structure of the signal. Hence, the K-CSC model which includes a low-rank constraint ``manages'' to not capture it and thus recovers the true signal with accuracy. In other words, taking into account the low-rank structure of the signal eliminates the noise and allows a better recovery of the activations. Furthermore, notice that since for both datasets $R^* = 2$ , the larger the signal, the more restrictive the rank constraint is. This leads to lower RMSEs on the large-scale dataset than on the small-scale one.

\begin{table}\scriptsize
	\caption{Results return on the CSC task on the two datasets with noise. For \algo, $R$ is set to the true value, $R^* = 2$. Mean and standard deviation are reported.}
	\label{tab:csc_res_noise_app}
	\centering
	\ra{1.3}
	\begin{adjustbox}{max width=\textwidth}
		\begin{tabular}{@{}ll|Hl|Hl@{}} \toprule
			\textbf{CSC \qquad $\mathbf{R=2}$}&& \multicolumn{2}{l}{\textbf{\emph{Noisy small-scale dataset}}} & \multicolumn{2}{l}{\textbf{\emph{Noisy large-scale dataset}}} \\
			\textbf{SNR} & \textbf{Metrics} & \textbf{T-ConvADMM} & \textbf{\algo} & \textbf{T-ConvADMM} & \textbf{\algo} \\
			\midrule
			$25$dB & RMSE$(\calY)$ $\downarrow$ & $\mathbf{3.988 \cdot e{-4} \,(\pm 4.121 \cdot e{-5})}$ & $\mathbf{3.999 \cdot e{-4} \,(\pm 4.427 \cdot e{-5})}$ & $\mathbf{6.523 \cdot e{-5} \,(\pm 5.970 \cdot e{-7})}$ & $\mathbf{7.606 \cdot e{-5} \,(\pm 1.253 \cdot e{-6})}$ \\
			& RMSE$(\calZ)$ $\downarrow$ & $\mathbf{2.397 \cdot e{-4} \,(\pm 2.331 \cdot e{-5})}$ & $\mathbf{2.403 \cdot e{-4} \,(\pm 2.528 \cdot e{-5})}$ & $\mathbf{3.820 \cdot e{-5} \,(\pm 4.431 \cdot e{-7})}$ & $\mathbf{4.469 \cdot e{-5} \,(\pm 1.1666 \cdot e{-6})}$ \\
			& $\#\{$RMSE$(\calY) < 1. e{-3}\}$ $\uparrow$ & $\mathbf{98 \%}$ & $\mathbf{94 \%}$ &  $\mathbf{86 \%}$ &  $\mathbf{90 \%}$ \\
			& $\#\{$RMSE$(\calZ) < 1. e{-3}\}$ $\uparrow$ & $\mathbf{98 \%}$ & $\mathbf{96 \%}$ & $\mathbf{86 \%}$ & $\mathbf{90 \%}$ \\
			\midrule
			
			$10$dB & RMSE$(\calY)$ $\downarrow$ & $\mathbf{2.513 \cdot e{-3} \,(\pm 1.046 \cdot e{-4})}$ & $\mathbf{2.492 \cdot e{-3} \,(\pm 9.024 \cdot e{-5})}$ & $\mathbf{4.254 \cdot e{-4} \,(\pm 9.016 \cdot e{-6})}$ & $\mathbf{4.958 \cdot e{-4} \,(\pm 7.733 \cdot e{-6})}$ \\
			& RMSE$(\calZ)$ $\downarrow$ & $1.509 \cdot e{-3} \,(\pm 6.113 \cdot e{-5})$ & $\mathbf{1.495 \cdot e{-3} \,(\pm 5.066 \cdot e{-5})}$ & $\mathbf{2.504 \cdot e{-4} \,(\pm 8.241 \cdot e{-6})}$ & $\mathbf{2.913 \cdot e{-4} \,(\pm 7.194 \cdot e{-4})}$ \\
			& $\#\{$RMSE$(\calY) < 2.5 e{-3}\}$ $\uparrow$ & $\mathbf{84 \%}$ & $\mathbf{84 \%}$ &  $84 \%$ &  $\mathbf{88 \%}$ \\
			& $\#\{$RMSE$(\calZ) < 2.5 e{-3}\}$ $\uparrow$ & $\mathbf{96 \%}$ & $\mathbf{98 \%}$ & $84 \%$ & $\mathbf{90 \%}$ \\
			\midrule
			
			$5$dB & RMSE$(\calY)$ $\downarrow$ & $5.224 \cdot e{-3} \,(\pm 3.302 \cdot e{-4})$ & $\mathbf{4.847 \cdot e{-3} \,(\pm 3.166 \cdot e{-4})}$ & $\mathbf{8.835 \cdot e{-4} \,(\pm 2.140 \cdot e{-5})}$ & $\mathbf{9.918 \cdot e{-4} \,(\pm 1.529 \cdot e{-5})}$ \\
			& RMSE$(\calZ)$ $\downarrow$ & $3.147 \cdot e{-3} \,(\pm 2.039 \cdot e{-4})$ & $\mathbf{2.894 \cdot e{-3} \,(\pm 1.805 \cdot e{-4})}$ & $\mathbf{5.187 \cdot e{-4} \,(\pm 1.708 \cdot e{-5})}$ & $\mathbf{5.828 \cdot e{-4} \,(\pm 1.434 \cdot e{-5})}$ \\
			& $\#\{$RMSE$(\calY) < 5. e{-3}\}$ $\uparrow$ & $41 \%$ & $\mathbf{52 \%}$ &  $84 \%$ &  $\mathbf{88 \%}$ \\
			& $\#\{$RMSE$(\calZ) < 4. e{-3}\}$ $\uparrow$ & $84 \%$ & $\mathbf{86 \%}$ & $84 \%$ & $\mathbf{88 \%}$ \\
			\toprule
			\textbf{CSC}&& \multicolumn{2}{l}{\textbf{\emph{Noisy small-scale dataset}}} & \multicolumn{2}{l}{\textbf{\emph{Noisy large-scale dataset}}} \\
			\textbf{SNR} & \textbf{Metrics} & \textbf{FCSC-SM} \citep{bristow2013fast, wohlberg2015efficient} & \textbf{ConvFISTA-FD} \citep{chalasani2013fast, wohlberg2015efficient} & \textbf{FCSC-SM} \citep{bristow2013fast, wohlberg2015efficient} & \textbf{ConvFISTA-FD} \citep{chalasani2013fast, wohlberg2015efficient} \\
			\midrule
			$25$dB & RMSE$(\calY)$ $\downarrow$ & $2.292 \cdot e{-3} \,(\pm 1.220 \cdot e{-5})$ & $2.109 \cdot e{-3} \,(\pm 2.547 \cdot e{-4})$ & $1.732 \cdot e{-3} \,(\pm 8.707 \cdot e{-6})$ & $1.732 \cdot e{-3} \,(\pm 8.703 \cdot e{-6})$ \\
			& RMSE$(\calZ)$ $\downarrow$ & $1.454 \cdot e{-3} \,(\pm 7.326 \cdot e{-5})$ & $1.311 \cdot e{-3} \,(\pm 2.027 \cdot e{-4})$ & $1.050 \cdot e{-3} \,(\pm 2.794 \cdot e{-6})$ & $1.050 \cdot e{-3} \,(\pm 2.791 \cdot e{-5})$ \\
			& $\#\{$RMSE$(\calY) < 1. e{-3}\}$ $\uparrow$ & $0\%$ & $0\%$ & $0\%$ & $0\%$ \\
			& $\#\{$RMSE$(\calZ) < 1. e{-3}\}$ $\uparrow$ &$0\%$ & $0\%$ & $10\%$ & $10\%$ \\

			\midrule
			$10$dB & RMSE$(\calY)$ $\downarrow$ & $6.734 \cdot e{-3} \,(\pm 7.117 \cdot e{-4})$ & $6.673 \cdot e{-3} \,(\pm 6.847 \cdot e{-4})$ & $6.689 \cdot e{-3} \,(\pm 3.344 \cdot e{-5})$ & $6.689 \cdot e{-3} \,(\pm 3.344 \cdot e{-4})$ \\
			& RMSE$(\calZ)$ $\downarrow$ & $4.393 \cdot e{-3} \,(\pm 6.060 \cdot e{-4})$ & $4.367 \cdot e{-3} \,(\pm 5.919 \cdot e{-4})$ & $4.405 \cdot e{-3} \,(\pm 3.011 \cdot e{-3})$ & $4.406 \cdot e{-3} \,(\pm 3.010 \cdot e{-3})$ \\
			& $\#\{$RMSE$(\calY) < 2.5 e{-3}\}$ $\uparrow$ & $0\%$ & $0\%$ & $0\%$ & $0\%$ \\
			& $\#\{$RMSE$(\calZ) < 2.5 e{-3}\}$ $\uparrow$ &$0\%$ & $0\%$ & $0\%$ & $0\%$ \\

			\midrule
			$5$dB & RMSE$(\calY)$ $\downarrow$ & $1.215 \cdot e{-2} \,(\pm 1.252 \cdot e{-3})$ & $1.209 \cdot e{-2} \,(\pm 1.202 \cdot e{-3})$ & $1.211 \cdot e{-2} \,(\pm 6.243 \cdot e{-4})$ & $1.211 \cdot e{-2} \,(\pm 6.242 \cdot e{-4})$ \\
			& RMSE$(\calZ)$ $\downarrow$ & $7.800 \cdot e{-3} \,(\pm 1.033 \cdot e{-3})$ & $7.774 \cdot e{-3} \,(\pm 1.009 \cdot e{-3})$ & $7.812 \cdot e{-3} \,(\pm 5.252 \cdot e{-4})$ & $7.813 \cdot e{-3} \,(\pm 5.251 \cdot e{-4})$ \\
			& $\#\{$RMSE$(\calY) < 5. e{-3}\}$ $\uparrow$ & $0\%$ & $0\%$ & $0\%$ & $0\%$ \\
			& $\#\{$RMSE$(\calZ) < 4. e{-3}\}$ $\uparrow$ &$0\%$ & $0\%$ & $0\%$ & $0\%$ \\
			\toprule
			\textbf{CSC}&& \multicolumn{2}{l}{\textbf{\emph{Noisy small-scale dataset}}} & \multicolumn{2}{l}{\textbf{\emph{Noisy large-scale dataset}}} \\
			\textbf{SNR} & \textbf{Metrics} & \textbf{FCSC-ShM} \citep{bristow2013fast, wohlberg2015efficient} & \textbf{FCSC-ShM} \citep{bristow2013fast, wohlberg2015efficient} & \textbf{FCSC-ShM} \citep{bristow2013fast, wohlberg2015efficient} & \textbf{FCSC-ShM} \citep{bristow2013fast, wohlberg2015efficient} \\
			\midrule
			$25$dB & RMSE$(\calY)$ $\downarrow$ & $2.292 \cdot e{-3} \,(\pm 1.220 \cdot e{-5})$ & $2.292 \cdot e{-3} \,(\pm 1.220 \cdot e{-5})$ & $1.732 \cdot e{-3} \,(\pm 8.707 \cdot e{-6})$ & $1.732 \cdot e{-3} \,(\pm 8.707 \cdot e{-6})$ \\
			& RMSE$(\calZ)$ $\downarrow$ & $1.454 \cdot e{-3} \,(\pm 7.326 \cdot e{-5})$ & $1.454 \cdot e{-3} \,(\pm 7.326 \cdot e{-5})$ & $1.050 \cdot e{-3} \,(\pm 2.794 \cdot e{-6})$ & $1.050 \cdot e{-3} \,(\pm 2.794 \cdot e{-6})$ \\
			& $\#\{$RMSE$(\calY) < 1. e{-3}\}$ $\uparrow$ & $0\%$ & $0\%$ & $0\%$ & $0\%$ \\
			& $\#\{$RMSE$(\calZ) < 1. e{-3}\}$ $\uparrow$ &$0\%$ & $0\%$ & $10\%$ & $10\%$ \\

			\midrule
			$10$dB & RMSE$(\calY)$ $\downarrow$ & $6.734 \cdot e{-3} \,(\pm 7.117 \cdot e{-4})$ & $6.734 \cdot e{-3} \,(\pm 7.117 \cdot e{-4})$ & $6.689 \cdot e{-3} \,(\pm 3.344 \cdot e{-5})$ & $6.689 \cdot e{-3} \,(\pm 3.344 \cdot e{-5})$ \\
			& RMSE$(\calZ)$ $\downarrow$ & $4.393 \cdot e{-3} \,(\pm 6.060 \cdot e{-4})$ & $4.393 \cdot e{-3} \,(\pm 6.060 \cdot e{-4})$ & $4.405 \cdot e{-3} \,(\pm 3.011 \cdot e{-3})$ & $4.405 \cdot e{-3} \,(\pm 3.011 \cdot e{-3})$ \\
			& $\#\{$RMSE$(\calY) < 2.5 e{-3}\}$ $\uparrow$ & $0\%$ & $0\%$ & $0\%$ & $0\%$ \\
			& $\#\{$RMSE$(\calZ) < 2.5 e{-3}\}$ $\uparrow$ &$0\%$ & $0\%$ & $0\%$ & $0\%$ \\

			\midrule
			$5$dB & RMSE$(\calY)$ $\downarrow$ & $1.215 \cdot e{-2} \,(\pm 1.252 \cdot e{-3})$ & $1.215 \cdot e{-2} \,(\pm 1.252 \cdot e{-3})$ & $1.211 \cdot e{-2} \,(\pm 6.243 \cdot e{-4})$ & $1.211 \cdot e{-2} \,(\pm 6.243 \cdot e{-4})$ \\
			& RMSE$(\calZ)$ $\downarrow$ & $7.800 \cdot e{-3} \,(\pm 1.033 \cdot e{-3})$ & $7.800 \cdot e{-3} \,(\pm 1.033 \cdot e{-3})$ & $7.812 \cdot e{-3} \,(\pm 5.252 \cdot e{-4})$ & $7.812 \cdot e{-3} \,(\pm 5.252 \cdot e{-4})$ \\
			& $\#\{$RMSE$(\calY) < 5. e{-3}\}$ $\uparrow$ & $0\%$ & $0\%$ & $0\%$ & $0\%$ \\
			& $\#\{$RMSE$(\calZ) < 4. e{-3}\}$ $\uparrow$ &$0\%$ & $0\%$ & $0\%$ & $0\%$ \\
			\bottomrule
		\end{tabular}
	\end{adjustbox}
\end{table}

\section{Additional results on the EEG dataset}
We provide additional results and illustrations for the EEG dataset.

\subsection{Identification of bad channels}
As stated in the main paper, the K-CDL model provides a highly interpretable decomposition of activations. In our setting, the first mode corresponds to the ``channel activations''. Looking at this mode, we see from the activation of one of the learnt atom that three common channels are defective (see Figure \ref{fig:eeg_bad_channel_app}). Fortunately, due to the rank constraint, we are able to reconstruct these channels. To emphasize this phenomenon, we choose a non-deficient channel which is spatially close to a deficient one. We assume that this good channel approximately reflects the values that the bad one would have taken. Therefore, we expect that the reconstruction of the bad channel is similar to the values of the good one. An illustration of the reconstruction is given in Figure \ref{fig:eeg_bad_channel_recovery_app} meaning that the two band of frequencies become less important with time.

\begin{figure}
	\centering
	\includegraphics[width=.2\linewidth]{./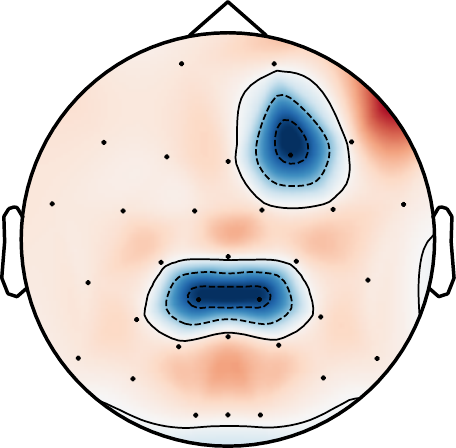}
	\hspace{2em}
	\includegraphics[width=.7\linewidth]{./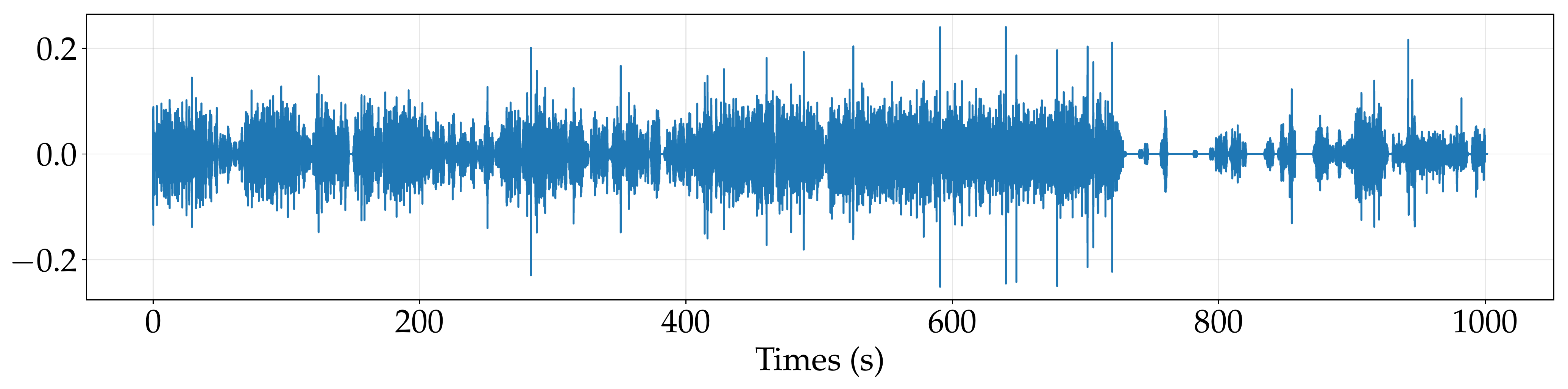}
	\caption{(Left) Spatial activations of the on of the learnt atom before removing the bad channels. (Right) Raw signal at one of the bad channel.}
	\label{fig:eeg_bad_channel_app}
\end{figure}

\begin{figure}
	\centering
	\includegraphics[width=.8\linewidth]{./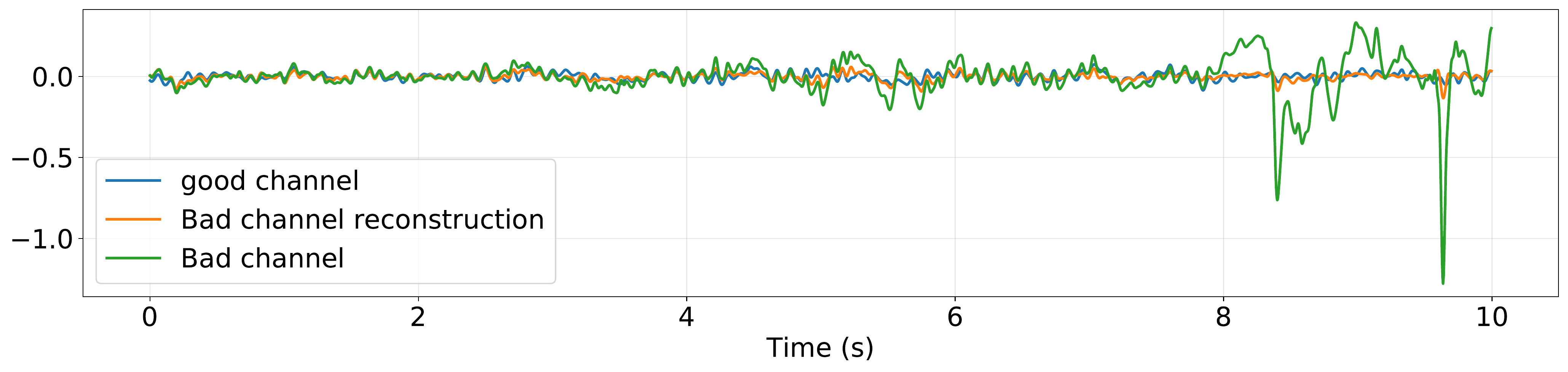}
	\includegraphics[width=.8\linewidth]{./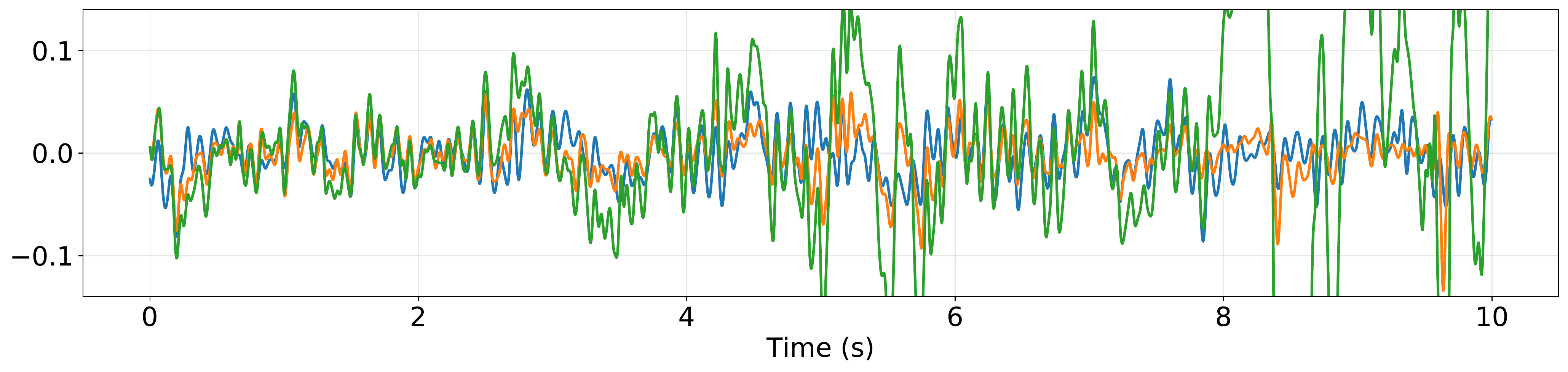}
   \includegraphics[width=.8\linewidth]{./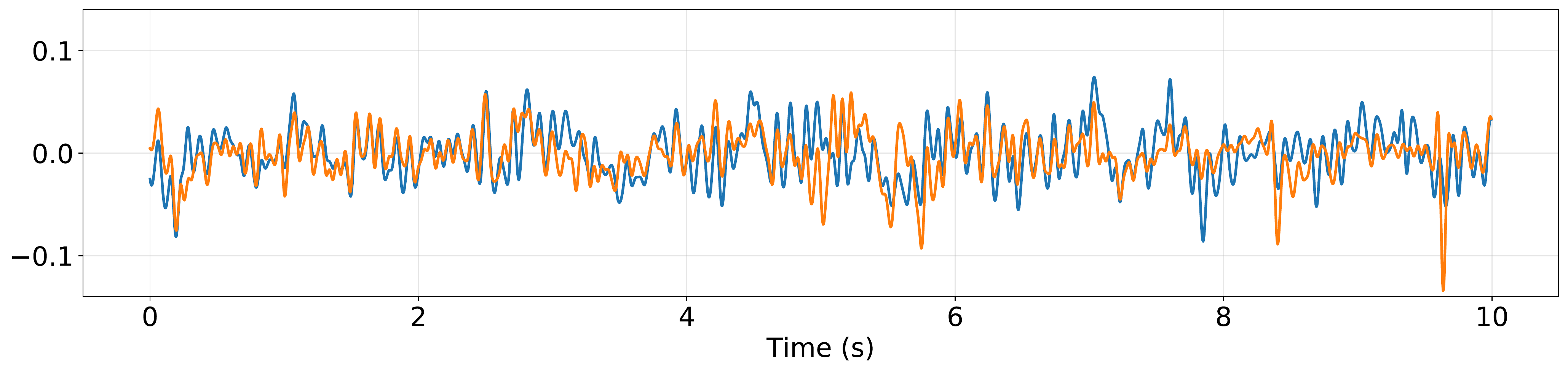}
	\caption{On the top, raw signal of a good channel (blue), a bad channel (green), and a reconstruction of the bad channel (orange). On the middle, a closer look to the good channel. On the bottom, only the good channel (blue) and the reconstruction of the bad channel (orange).}
	\label{fig:eeg_bad_channel_recovery_app}
\end{figure}

\subsection{$\boldsymbol{\delta}$ and $\boldsymbol{\alpha}$-waves during a general anesthesia}
From the second mode (relative to the frequency domain), we have identified atoms link to $\delta$ and $\alpha$-waves, the two most important band of frequencies in General Anesthesia (GA) \citep{purdon2013electroencephalogram}. We know from \citep{purdon2013electroencephalogram} that the importance of these bands during the Recovery Of Consciousness (ROC) tends to decrease in time. Interestingly, this phenomenon is captured by \algo. Indeed, Figure \ref{fig:eeg_alpha_beta_app} shows that the time activations related to these two bands decreases with time.

\begin{figure}
	\centering
    \includegraphics[width=.8\linewidth]{./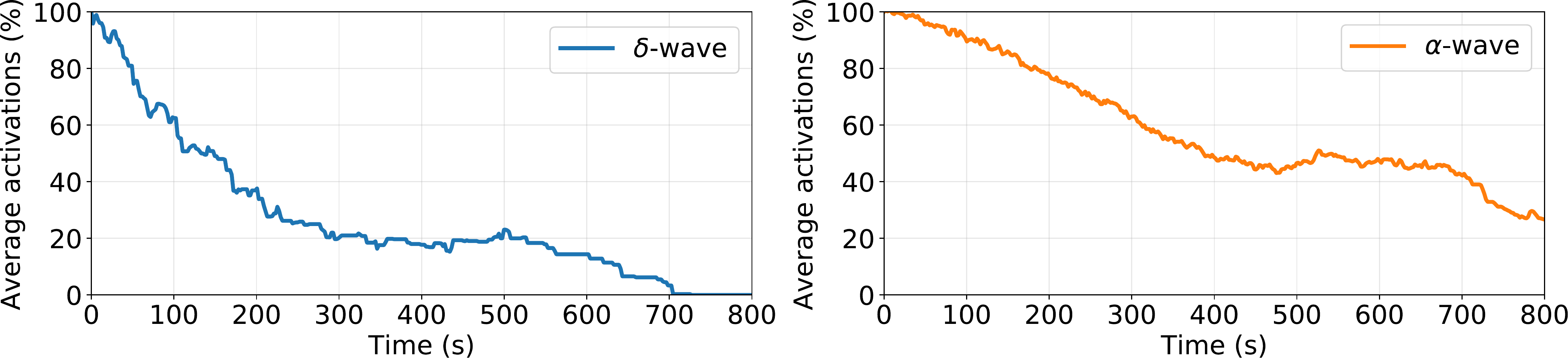}
	\caption{Evolution of the time activations for the first and second atoms of Figure \ref{fig:eeg_atom_small_app} which are relative to the $\delta$ and $\alpha$ waves.}
	\label{fig:eeg_alpha_beta_app}
\end{figure}

\subsection{Removing atoms from the final reconstruction}
In the main paper, we identified the third atom as an atom relative to some impulsion noise. Thanks to its identification, we can remove it from the final reconstruction. We therefore obtain a signal which do not contain this impulsion noise. An illustration is given in Figure \ref{fig:eeg_filtering_app}. Note that, this particular noise is low-rank. Hence, the K-CDL is not robust to it, contrary to the Gaussian noise from the synthetic section. Atoms and activations are displayed in Figure \ref{fig:eeg_atom_small_app}.

\begin{figure}
	\centering
	\includegraphics[width=.8\linewidth]{./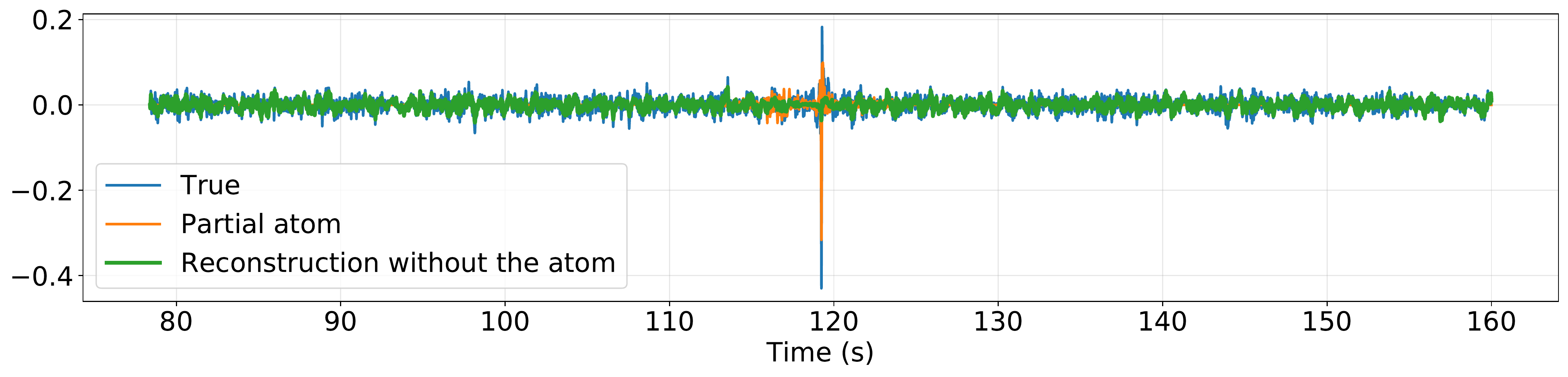}
	\caption{Reconstruction of the signal when one atom relative to the impulsion noise (at $120$s) is removed.}
	\label{fig:eeg_filtering_app}
\end{figure}

\begin{figure}[H]
	\centering
    \includegraphics[width=.82\linewidth]{./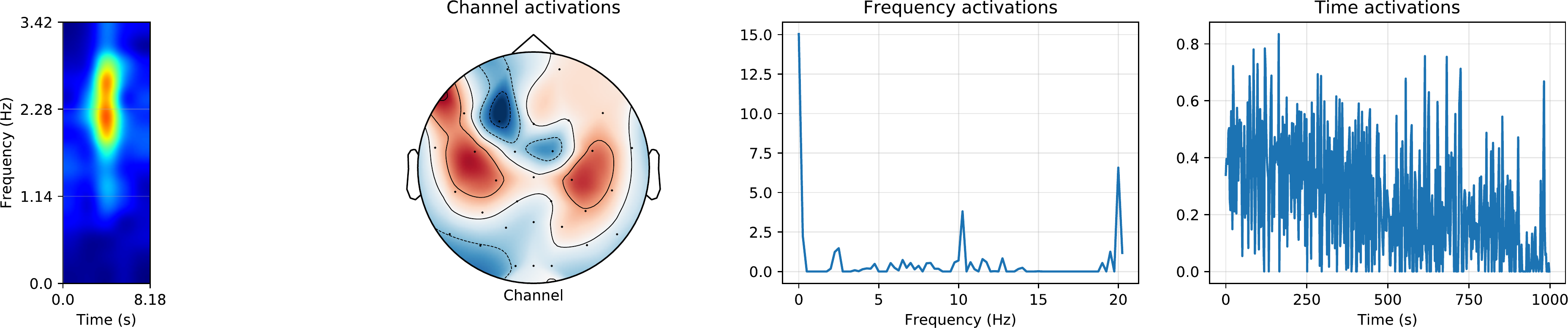}
	\includegraphics[width=.82\linewidth]{./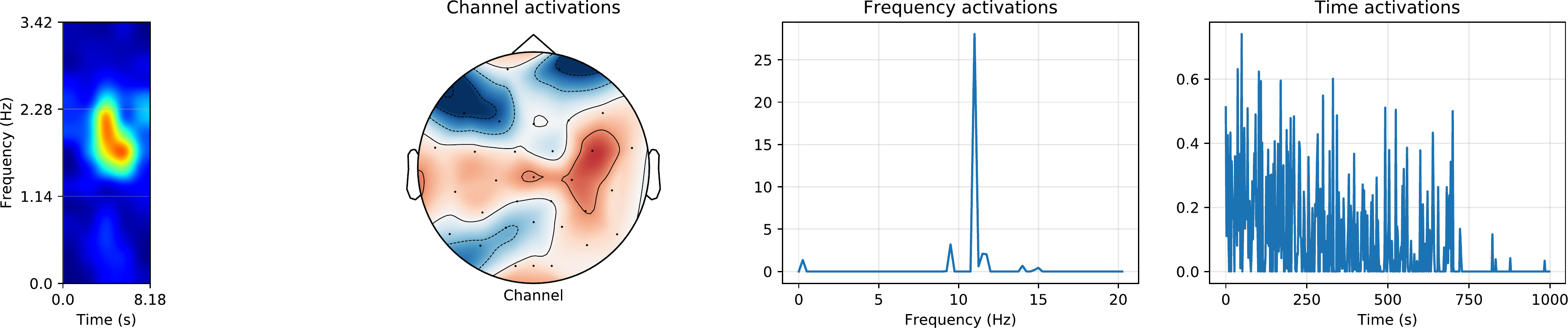}
	\includegraphics[width=.82\linewidth]{./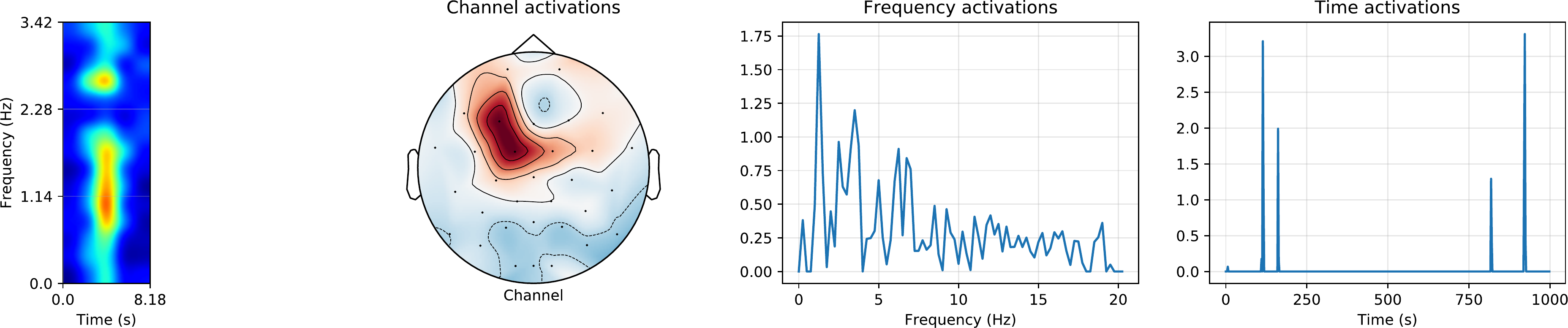}
	\includegraphics[width=.82\linewidth]{./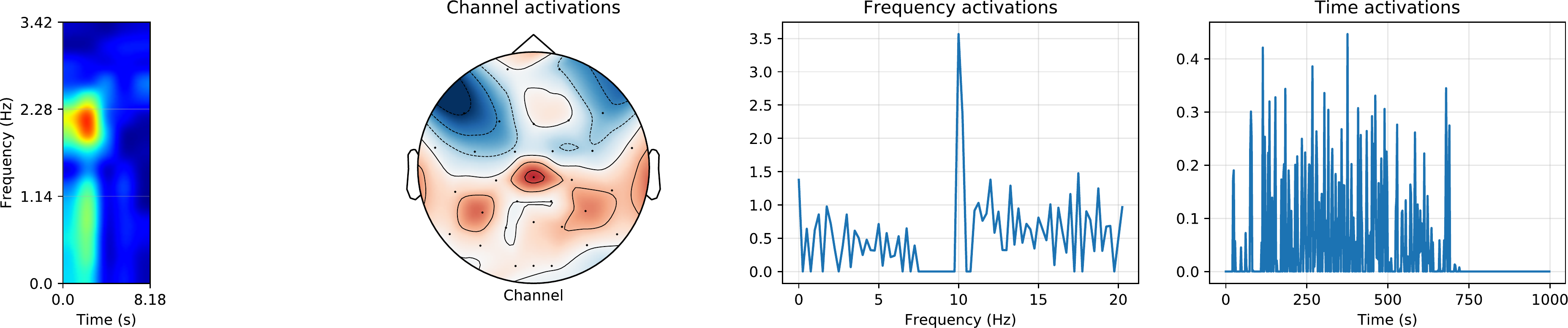}
	\caption{The four atoms with their partial activations. From left to right: the time-frequency atom, the channel activations (mode $1$) on the scalp, the frequency activations (mode $2$), and the time activations (mode $3$). Scale not the same for each figure.}
	\label{fig:eeg_atom_small_app}
\end{figure}

\section{Technical proofs}
This section provides the technical proofs of the different propositions exposed in the main paper. To be as complete as possible, we recall well-known but important definitions and theoretical properties from multidimensional Fourier analysis. We begin with two important definitions.\\

\begin{definition}\emph{(Discrete Fourier Transform (DFT)) --}
	Let consider a function $\calF$ defined on $\{0 \ldots, N_1-1\} \times \cdots \times \{0 \ldots, N_p-1\}$ with period $(N_1, \cdots, N_p)$. The Discrete Fourier Transform (DFT) of $\calF$ is given by
	\begin{equation*}
		\widehat{\calF}[k_1, \cdots, k_p] = \sum_{n_1=0}^{N_1-1} \cdots \sum_{n_p=0}^{N_p-1} \calF[n_1, \ldots, n_p] \exp \left({- i 2 \pi \left(\dfrac{k_1 n_1}{N_1}, \ldots, \dfrac{k_p n_p}{N_p}\right)}\right) \; ,
	\end{equation*}
	and the Inverse DFT (IDFT) of $\widehat{\calF}$ is given by
	\begin{equation*}
		\calF[n_1, \cdots, n_p] = \left(\dfrac{1}{\prod^p_{i=1} N_i}\right) \cdot \sum_{k_1=0}^{N_1-1} \cdots \sum_{k_p=0}^{N_p-1} \widehat{\calF}[k_1, \ldots, k_p] \exp \left({i 2 \pi \left(\dfrac{k_1 n_1}{N_1}, \ldots \dfrac{k_p n_p}{N_p}\right)}\right) \; .
	\end{equation*}
\end{definition}

Let now consider the periodization of two discrete function $\calF$ and $\calG$,
\begin{align*}
\tilde{\calF}[n_1, \cdots, n_p] &= \calF[n_1 \; \text{mod} \;  N_1, \cdots, n_p \; \text{mod} \;  N_p] \\
\tilde{\calG}[n_1, \cdots, n_p] &= \calG[n_1 \; \text{mod} \;  N_1, \cdots, n_p \; \text{mod} \;  N_p] \; .
\end{align*}
The two functions $\tilde{\calF}$ and $\tilde{\calG}$ are now two discrete functions with period $(N_1, \cdots, N_p)$ (each of the modes are periodic one-dimensional signals). The \textit{circular convolution} is defined as follow. \\

\begin{definition}\emph{(Circular discrete convolution) --}
	Let consider two functions $\calF, \calG$ defined on $\{0 \ldots, N_1-1\} \times \cdots \times \{0 \ldots, N_p-1\}$ with both a period of $(N_1, \cdots, N_p)$. The circular convolution between $\tilde{\calF}$ and $\tilde{\calG}$ is given by
	\begin{equation*}
	(\tilde{\calF} \ostar \tilde{\calG})[n_1, \cdots, n_p] = \sum_{k_1=0}^{N_1-1} \cdots \sum_{k_p=0}^{N_p-1} \tilde{\calF}[k_1, \cdots, k_p] \tilde{\calG}[n_1 - k_1, \cdots, n_p - k_p] \; .
	\end{equation*}
\end{definition}
\noindent
$\tilde{\calF} \ostar \tilde{\calG}$ is a signal of period $(N_1, \cdots, N_p)$ and can be decomposed in a Fourier basis like classical periodic signals which give rises to the following important theorem. \\

\begin{theorem} \emph{(Discrete convolution theorem) --}
\label{thm:dct}
	If $\calF$ and $\calG$ have period $(N_1, \cdots, N_p)$, then the DFT of $\calH = \calF \ostar \calG$ is
	\begin{equation*}
	\widehat{\calH}[n_1, \cdots, n_p] = \widehat{\calF}[n_1, \cdots, n_p] \ast \widehat{\calG}[n_1, \cdots, n_p] \; , \quad \text{or in tensor notation} \quad	\widehat{\calH} = \widehat{\calF} \ast \widehat{\calG} \; ,
	\end{equation*}
	where $\ast$ is the component-wise product or Hadamard product.
\end{theorem}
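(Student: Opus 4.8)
The plan is to establish the identity by a direct computation that parallels the one-dimensional argument: I substitute the definition of the circular convolution into the definition of the multidimensional DFT and then exploit the multiplicative separability of the Fourier kernel together with the periodicity of the signals. To avoid a clash with the frequency index, I reserve $k_1,\ldots,k_p$ for the frequency variables (as in the DFT definition) and use $m_1,\ldots,m_p$ for the convolution summation index.

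First I would expand $\widehat{\calH}[k_1,\ldots,k_p]$ using the DFT definition and replace $\calH=\calF\ostar\calG$ by its defining sum, yielding the double sum
\begin{equation*}
\widehat{\calH}[k_1,\ldots,k_p]=\sum_{n_1,\ldots,n_p}\sum_{m_1,\ldots,m_p}\tilde{\calF}[m_1,\ldots,m_p]\,\tilde{\calG}[n_1-m_1,\ldots,n_p-m_p]\exp\!\left(-i2\pi\sum_{j=1}^{p}\frac{k_j n_j}{N_j}\right),
\end{equation*}
where each index ranges over a full period $\{0,\ldots,N_j-1\}$. Next I would interchange the two finite summations and perform, in the inner sum over $n$, the change of variable $\ell_j=n_j-m_j \bmod N_j$. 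Writing $n_j=m_j+\ell_j$ in the exponent and using the elementary factorization
\begin{equation*}
\exp\!\left(-i2\pi\sum_{j}\frac{k_j(m_j+\ell_j)}{N_j}\right)=\exp\!\left(-i2\pi\sum_{j}\frac{k_j m_j}{N_j}\right)\exp\!\left(-i2\pi\sum_{j}\frac{k_j \ell_j}{N_j}\right),
\end{equation*}
the double sum separates into a product of two independent single sums. Recognizing each factor as a DFT then gives $\widehat{\calH}[k]=\widehat{\calF}[k]\,\widehat{\calG}[k]$, which is exactly the claimed Hadamard product $\widehat{\calH}=\widehat{\calF}\ast\widehat{\calG}$.

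I expect the main subtlety to lie in the change of variable within the circular setting. Because $\tilde{\calG}$ and the kernel $\exp(-i2\pi k_j\,\cdot\,/N_j)$ are both $N_j$-periodic in each coordinate, as $n_j$ ranges over one full period with $m_j$ held fixed the shifted index $\ell_j=n_j-m_j \bmod N_j$ also ranges over the same full period; hence the inner sum reproduces $\widehat{\calG}[k]$ exactly, with no boundary or wrap-around correction terms. This periodicity argument is precisely what distinguishes the circular convolution theorem from its linear (aperiodic) counterpart and is the one step requiring genuine care. Once it is in place, the remaining manipulations — interchanging finite sums and splitting the separable exponential — are routine bookkeeping.
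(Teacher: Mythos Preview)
Your argument is correct and is the standard textbook derivation of the multidimensional circular convolution theorem. The paper itself states this result as a well-known background fact and does not supply a proof, so there is nothing to compare against; your direct computation via change of variables and periodicity is exactly what one would write if a proof were required.
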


We now present a simple lemma showing the important advantage of separable signals (i.e. low-rank signal) over non-separable ones in term of complexity. \\

\begin{lemma}\emph{(Mode-wise DFT) --}
\label{lem:mwd}
	Let $\calX$ be a tensor in $\IX$ with CP-decomposition $[\![\bX^{(1)}, \ldots, \bX^{(p)}]\!]$. The DFT of $\calX$ can be performed mode-wise i.e.
	\begin{align}
	\widehat{\calX} = \sum_{r=1}^{R} \widehat{\bx}_r^{(1)} \circ \cdots \circ \widehat{\bx}_r^{(p)} = [\![\widehat{\bX}^{(1)}, \ldots, \widehat{\bX}^{(p)}]\!]\; ,
	\end{align}
	where the DFT is taken along each columns of the factors. The complexity of the computation of $(\widehat{\bX}^{(1)}, \cdots, \widehat{\bX}^{(p)})$ using the Fast Fourier Transform algorithm (FFT) is of $\calO(R \sum_{i=1}^{p}n_i \log(n_i)))$ instead of $\calO(\prod_{i=1}^{p}n_i \log(\prod_{i=1}^{p}n_i))$.
\end{lemma}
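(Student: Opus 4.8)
The plan is to establish the algebraic identity first, by reducing to a single rank-one term via linearity of the DFT, and then to handle the complexity count as a separate and elementary bookkeeping step.

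First I would invoke linearity of the multidimensional DFT. By Definition \ref{def:kruskal_op} we have $\calX = \sum_{r=1}^{R} \bx^{(1)}_r \circ \cdots \circ \bx^{(p)}_r$, and since the DFT is a linear operator it suffices to prove the claim for a single rank-one tensor $\calX_r = \bx^{(1)}_r \circ \cdots \circ \bx^{(p)}_r$ and then sum over $r$. This step collapses the whole problem onto the rank-one case, where the separability of the Fourier kernel can be exploited directly.

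Next I would compute the DFT of $\calX_r$ straight from the definition. Its entries factor as $\calX_r[n_1, \ldots, n_p] = \prod_{i=1}^{p} \bx^{(i)}_r[n_i]$, so substituting into the DFT sum and using the fact that the multidimensional kernel splits, $\exp(-i 2\pi \sum_{i=1}^{p} k_i n_i / N_i) = \prod_{i=1}^{p} \exp(-i 2\pi k_i n_i / N_i)$, the single $p$-fold sum decouples into a product of $p$ independent one-index sums. Each such sum is exactly the one-dimensional DFT $\widehat{\bx}^{(i)}_r[k_i]$ of the $r$-th column of $\bX^{(i)}$, so the $(k_1, \ldots, k_p)$ entry of the DFT of $\calX_r$ equals $\prod_{i=1}^{p} \widehat{\bx}^{(i)}_r[k_i]$, which is precisely the corresponding entry of $\widehat{\bx}^{(1)}_r \circ \cdots \circ \widehat{\bx}^{(p)}_r$. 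Summing over $r$ and re-folding with the Kruskal operator yields $\widehat{\calX} = \sum_{r=1}^{R} \widehat{\bx}^{(1)}_r \circ \cdots \circ \widehat{\bx}^{(p)}_r = [\![\widehat{\bX}^{(1)}, \ldots, \widehat{\bX}^{(p)}]\!]$, with the DFT taken columnwise.

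Finally, for the complexity I would count operations in the mode-wise scheme: for each mode $i$ one computes an FFT of each of the $R$ columns of $\bX^{(i)}$, costing $\calO(n_i \log n_i)$ per column and hence $\calO(R n_i \log n_i)$ for that mode; summing over the $p$ modes gives $\calO(R \sum_{i=1}^{p} n_i \log n_i)$, to be contrasted with the $\calO(M \log M)$ cost of a dense FFT on the full tensor of size $M = \prod_{i=1}^{p} n_i$. The main obstacle here is not conceptual, since everything follows from separability of the Fourier kernel; the only real care needed is notational, namely keeping the rank index $r$ and the per-mode index pairs $(k_i, n_i)$ distinct and writing the kernel factorization cleanly enough that the $p$-fold sum genuinely decouples into a product of one-dimensional DFTs.
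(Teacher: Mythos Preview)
Your proposal is correct and follows exactly the approach the paper has in mind: the paper's own proof simply states that the identity is straightforward from the definitions of the DFT and the CP-decomposition, and that the complexity follows because only $1$-D FFTs are performed. You have just written out those two sentences in full detail, exploiting linearity to reduce to the rank-one case and then the separability of the multidimensional Fourier kernel to factor the $p$-fold sum into a product of one-dimensional DFTs.
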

\begin{proof}
	Using the definition of both the DFT and the CP-decomposition, the proof is straightforward. Furthermore, as we only perform $1$-D FFT, we obtain the given complexity.
\end{proof}

To prove the new results, we will extensively used matricization and vectorization techniques. One important formula in tensor algebra is given by the above proposition.
\begin{proposition}\emph{(Matricization of the Kruskal operator \citep{kolda2009tensor}) --}
	\label{prop:Mat_kruskal}
	Let $\calX$ be a tensor in $\IX$ with CP-decomposition $[\![\bX^{(1)}, \ldots, \bX^{(p)}]\!]$. Then, 
	\begin{align*}
		\bX_{(q)} &= \bX_q \left(\bX_p \odot \cdots \odot \bX_{q+1} \odot \bX_{q-1} \odot \cdots \odot \bX_1 \right)^\transpose
		= \bX_q \left(\stackrel{\hookleftarrow}{\odot}^p_{i=1, i\neq q} \bX_i \right)^\transpose \; ,
	\end{align*}
	where $\odot$ is the Khatri–Rao product and $\stackrel{\hookleftarrow}{\odot}^p_{i=1}$ denotes the product of $p$ Khatri–Rao products in reverse order. We can also also vectorized this formula which gives
	\begin{align*}
	\text{vec}(\bX_{(q)}) &= \left(\bX_p \odot \cdots \odot \bX_{q+1} \odot \bX_{q-1} \odot \cdots \odot \bX_1 \otimes \boldI_{n_q} \right) \text{vec}(\bX_q) \\
	&= \left(\stackrel{\hookleftarrow}{\odot}^p_{i=1, i\neq q} \bX_i  \otimes \boldI_{n_q} \right) \text{vec}(\bX_q)  \; ,
	\end{align*}
	where $\boldI_{n_q}$ is an identity matrix of size $(n_q \times n_q)$. \\
\end{proposition}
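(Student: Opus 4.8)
The plan is to establish the matricization identity for a single rank-one term, then invoke the linearity of the mode-$q$ unfolding to sum over the $R$ components, and finally apply a standard vectorization identity. First I would recall that, writing the CP-decomposition as $\calX = \sum_{r=1}^{R} \bx^{(1)}_r \circ \cdots \circ \bx^{(p)}_r$, the mode-$q$ unfolding is a linear map, so it suffices to treat each rank-one tensor $\bx^{(1)}_r \circ \cdots \circ \bx^{(p)}_r$ separately and add the resulting matrices.

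For a single rank-one term I would compute the unfolding directly from its definition. The entry of the outer product at the multi-index $(i_1, \ldots, i_p)$ equals the product $\prod_{j=1}^{p} \bx^{(j)}_r(i_j)$. Under the mode-$q$ matricization the row index is $i_q$, while the column index encodes the remaining indices $(i_1, \ldots, i_{q-1}, i_{q+1}, \ldots, i_p)$ in the standard ordering in which the last mode varies slowest. Matching this column-index convention to the ordering of a Kronecker product shows that the rank-one unfolding equals $\bx^{(q)}_r \left(\bx^{(p)}_r \otimes \cdots \otimes \bx^{(q+1)}_r \otimes \bx^{(q-1)}_r \otimes \cdots \otimes \bx^{(1)}_r\right)^\transpose$, a rank-one matrix.

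Summing over $r$, I would use the elementary fact that $\sum_{r=1}^{R} \ba_r \bb_r^\transpose = \bA \bB^\transpose$ whenever $\bA$ and $\bB$ collect the $\ba_r$ and $\bb_r$ as their columns. Here the $r$-th column of the right factor is exactly the Kronecker product $\bx^{(p)}_r \otimes \cdots \otimes \bx^{(q+1)}_r \otimes \bx^{(q-1)}_r \otimes \cdots \otimes \bx^{(1)}_r$, which by definition of the Khatri--Rao product is the $r$-th column of $\stackrel{\hookleftarrow}{\odot}^p_{i=1, i\neq q} \bX_i$. This yields the first identity $\bX_{(q)} = \bX_q \left(\stackrel{\hookleftarrow}{\odot}^p_{i=1, i\neq q} \bX_i\right)^\transpose$. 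For the vectorized form I would apply the standard identity $\text{vec}(\bA \bB) = (\bB^\transpose \otimes \boldI) \text{vec}(\bA)$ with $\bA = \bX_q$ and $\bB = \left(\stackrel{\hookleftarrow}{\odot}^p_{i=1, i\neq q} \bX_i\right)^\transpose$, which immediately produces $\text{vec}(\bX_{(q)}) = \left(\stackrel{\hookleftarrow}{\odot}^p_{i=1, i\neq q} \bX_i \otimes \boldI_{n_q}\right) \text{vec}(\bX_q)$.

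I expect the main obstacle to be purely combinatorial: correctly aligning the column-index convention of the mode-$q$ unfolding with the reverse ordering of the Khatri--Rao factors. Once the single-term identity is verified with the right ordering, the summation collapses the rank-one pieces into a single matrix product, and the vectorization step is a routine application of the $\text{vec}$--Kronecker identity.
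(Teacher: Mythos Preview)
Your proof is correct and follows the standard argument for this classical identity: reduce to rank-one terms by linearity of the unfolding, identify the mode-$q$ unfolding of a single outer product as $\bx^{(q)}_r$ times the Kronecker product of the remaining factors (in reverse order to match the column-index convention), collect the $R$ rank-one matrices via the Khatri--Rao product, and finish with the $\text{vec}(\bA\bB) = (\bB^\transpose \otimes \boldI)\,\text{vec}(\bA)$ identity.

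Note, however, that the paper does not supply its own proof of this proposition: it is stated as a known result with a citation to Kolda and Bader (2009), and then used as a tool in the proof of Theorem~1. So there is no ``paper's proof'' to compare against; your argument is exactly the standard derivation one finds in that reference, and the only delicate point---as you correctly anticipate---is bookkeeping the reverse ordering of the Khatri--Rao factors to match the chosen unfolding convention.
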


\begin{lemma}\emph{(Equality in the Fourier domain) --}
	The fidelity term $f(\cdot)$ from Equation (3) of the main paper is equals in the Fourier domain to
	\begin{equation}
	f\left(\{\bZ_{k, q}\}^K_{k=1}\right) = \dfrac{1}{2 \prod_{i=1}^{p}N_i} \left\lVert \widehat{\calY} - \sum_{k=1}^{K} \widehat{\calD}_k \ast [\![\widehat{\bZ}^{(1)}_{k}, \cdots, \widehat{\bZ}^{(p)}_{k}]\!] \right\rVert^2_F \; .
	\end{equation}
\end{lemma}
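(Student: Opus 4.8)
The plan is to view $f$ as one-half the squared Frobenius norm of the residual tensor $\boldsymbol{\mathcal{R}} \triangleq \calY - \sum_{k=1}^{K} \calD_k \ostar \calZ_k$, where I write $\calZ_k \triangleq [\![\bZ^{(1)}_{k}, \ldots, \bZ^{(p)}_{k}]\!]$ for the $k$-th (low-rank) activation, and then to transport this quantity into the Fourier domain. The guiding observation is that each ingredient of $\boldsymbol{\mathcal{R}}$ has a clean Fourier counterpart already recorded above: the Frobenius norm is controlled by Parseval, the circular convolution $\ostar$ turns into an entrywise product by Theorem \ref{thm:dct}, and the DFT of the Kruskal activation factorizes mode by mode by Lemma \ref{lem:mwd}. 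Consequently the argument is a short chain of substitutions rather than a computation, and the only real bookkeeping is the normalization constant.

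Concretely, I would proceed in four steps. First, I apply the multidimensional Parseval (Plancherel) identity for the DFT, which under the unnormalized-forward convention adopted above reads $\|\boldsymbol{\mathcal{R}}\|_F^2 = \tfrac{1}{\prod_{i=1}^{p} N_i}\,\|\widehat{\boldsymbol{\mathcal{R}}}\|_F^2$; this is what produces the leading factor $\tfrac{1}{2\prod_i N_i}$. Second, using linearity of the DFT I distribute it over the residual, obtaining $\widehat{\boldsymbol{\mathcal{R}}} = \widehat{\calY} - \sum_{k=1}^{K} \widehat{\calD_k \ostar \calZ_k}$. Third, I invoke the discrete convolution theorem (Theorem \ref{thm:dct}) on each summand to replace the circular convolution by the Hadamard product, $\widehat{\calD_k \ostar \calZ_k} = \widehat{\calD_k} \ast \widehat{\calZ_k}$. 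Fourth, I apply the mode-wise DFT lemma (Lemma \ref{lem:mwd}) to the rank-$R$ activation, which gives $\widehat{\calZ_k} = [\![\widehat{\bZ}^{(1)}_{k}, \ldots, \widehat{\bZ}^{(p)}_{k}]\!]$. Substituting these identities into the norm yields exactly the claimed right-hand side, with equality (not merely proportionality) once the constant from the first step is carried through.

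The part requiring the most care -- more a matter of conventions than a genuine obstacle -- is twofold. On one hand, the identity rests on treating each atom $\calD_k$ as zero-padded to the full signal size $n_1 \times \cdots \times n_p$, so that $\ostar$ is a bona fide circular convolution and Theorem \ref{thm:dct} applies verbatim; without this padding the convolution theorem would fail at the boundaries. On the other hand, the precise value of the Parseval constant is dictated by the chosen DFT normalization, and I would pin it down by expanding $\|\boldsymbol{\mathcal{R}}\|_F^2 = \langle \boldsymbol{\mathcal{R}}, \boldsymbol{\mathcal{R}}\rangle$, substituting the IDFT, and collapsing the resulting double frequency sum through orthogonality of the Fourier characters to a single sum scaled by $\prod_{i=1}^{p} N_i$. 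Once these two conventions are fixed, the remaining manipulations are the routine substitutions above.
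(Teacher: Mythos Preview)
Your proposal is correct and follows essentially the same route as the paper: Parseval/Plancherel for the normalization constant, linearity of the DFT, the discrete convolution theorem (Theorem~\ref{thm:dct}) to turn $\ostar$ into the Hadamard product, and the mode-wise DFT lemma (Lemma~\ref{lem:mwd}) to factor $\widehat{\calZ_k}$ through the Kruskal operator. Your added remarks on zero-padding of the atoms and on pinning down the Parseval constant are welcome clarifications but do not change the structure of the argument.
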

\begin{proof}
The proof rests on several equalities and properties.
	\begin{align*}
	&\dfrac{1}{2}\left\lVert \calY - \sum_{k=1}^{K} \calD_k \ostar \sum_{r=1}^{R} \bz^{(1)}_{k, r} \circ \cdots \circ \bz^{(p)}_{k, r} \right\rVert^2_F \\
	&= \dfrac{1}{2\prod_{i=1}^{p}N_i} \left\lVert \widehat{\calY} - \sum_{k=1}^{K} \textit{DFT}(\calD_k \ostar \sum_{r=1}^{R} \bz^{(1)}_{k, r} \circ \cdots \circ \bz^{(p)}_{k, r}) \right\rVert^2_F \quad \text{(Parseval's theorem -- Plancherel)} \\
	&= \dfrac{1}{2\prod_{i=1}^{p}N_i}\left\lVert \widehat{\calY} - \sum_{k=1}^{K} \widehat{\calD}_k \ast \sum_{r=1}^{R} \textit{DFT}(\bz^{(1)}_{k, r} \circ \cdots \circ \bz^{(p)}_{k, r}) \right\rVert^2_F \quad \text{(Convolution theorem \eqref{thm:dct})}\\
	&= \dfrac{1}{2\prod_{i=1}^{p}N_i}\left\lVert \widehat{\calY} - \sum_{k=1}^{K} \widehat{\calD}_k \ast \sum_{r=1}^{R} \widehat{\bz}^{(1)}_{k, r} \circ \cdots \circ \widehat{\bz}^{(p)}_{k, r} \right\rVert^2_F \quad \text{(Lemma \eqref{lem:mwd})} \\
	&= \dfrac{1}{2\prod_{i=1}^{p}N_i}\left\lVert \widehat{\calY} - \sum_{k=1}^{K} \widehat{\calD}_k \ast [\![\widehat{\bZ}^{(1)}_{k}, \cdots, \widehat{\bZ}^{(p)}_{k}]\!] \right\rVert^2_F \quad \text{(Kruskal operator)} \; .
	\end{align*}
\end{proof}

\begin{figure}
	\centering
	\includegraphics[width=0.2\linewidth]{./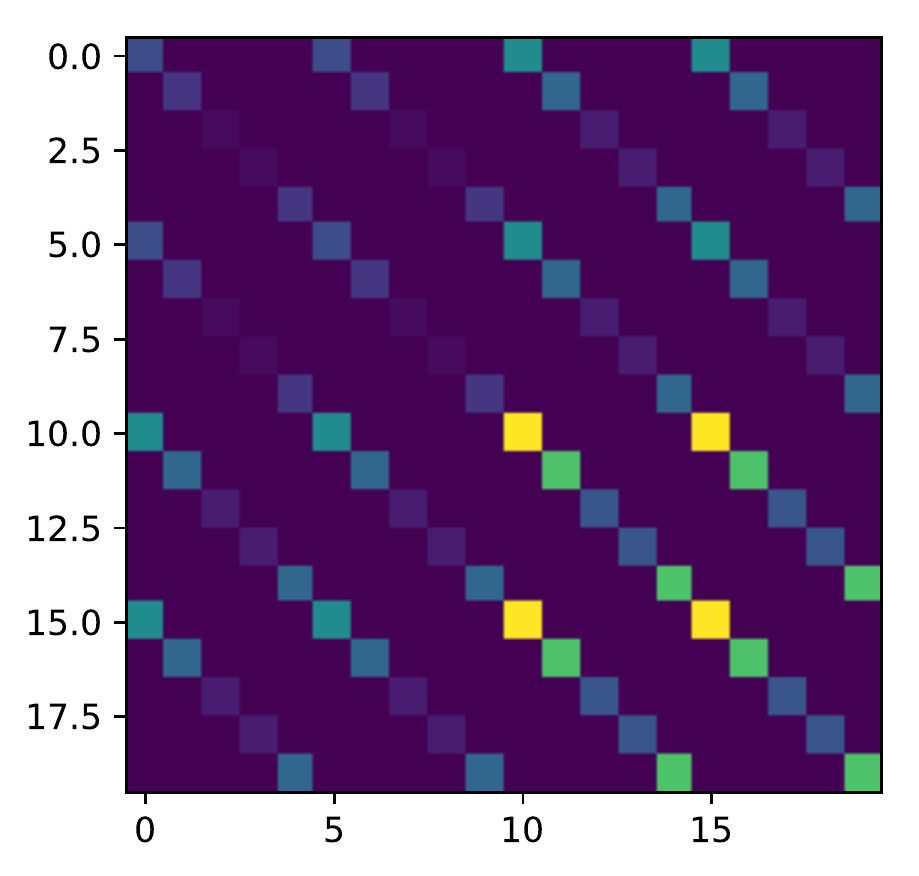}
	\includegraphics[width=0.2\linewidth]{./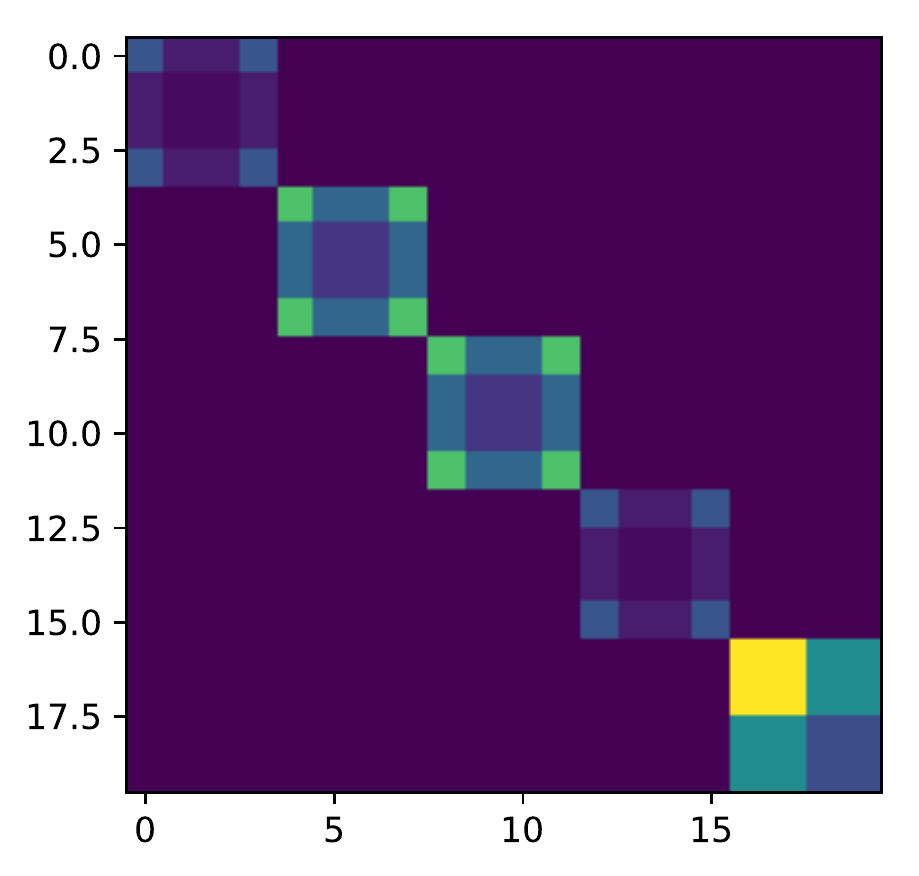}
\hspace{3em}
	\includegraphics[width=0.2\linewidth]{./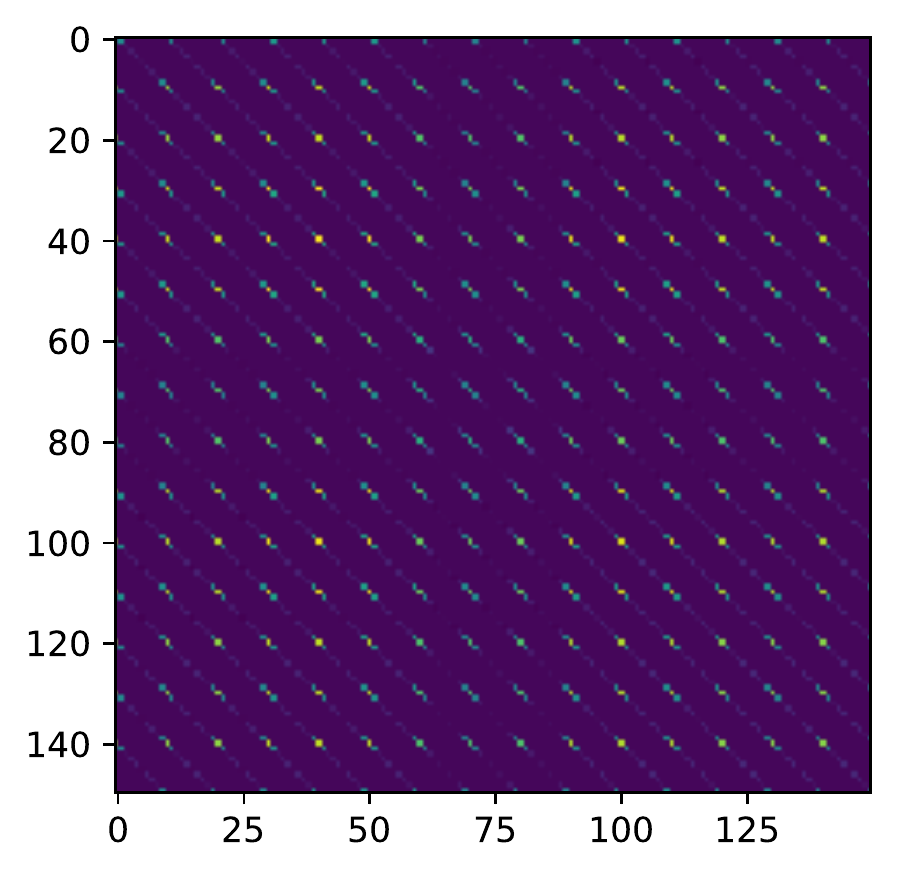}
    \includegraphics[width=0.2\linewidth]{./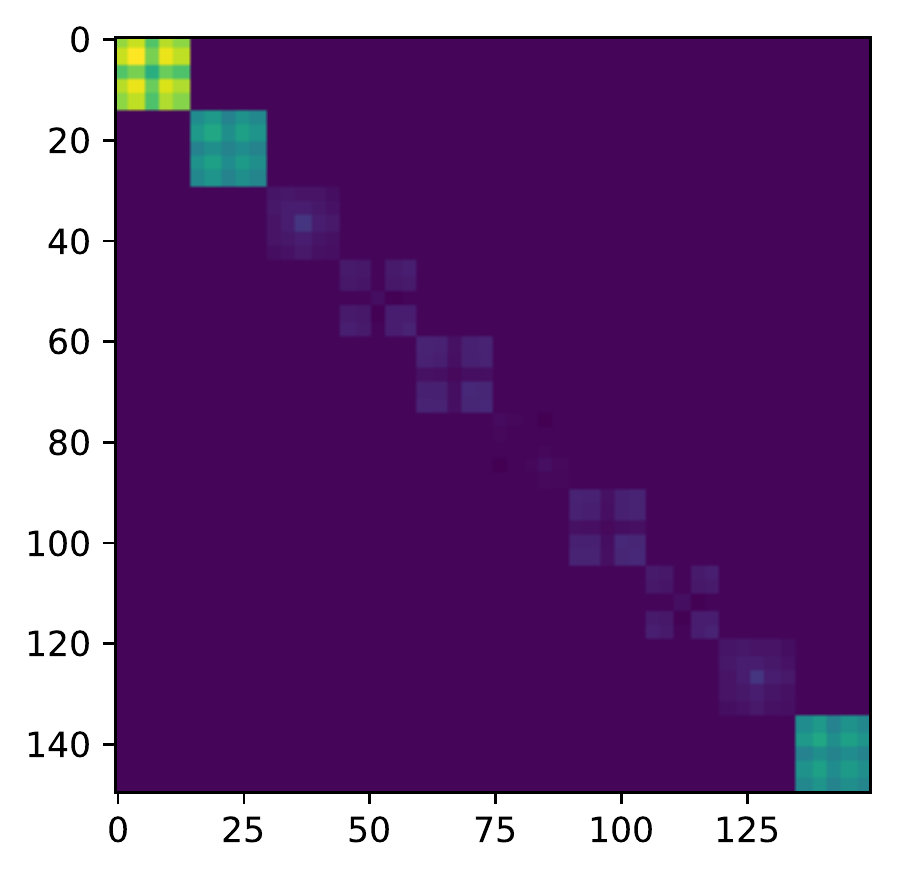}
	\caption{Visualization of $G = (\widehat{\bA}^{H} \otimes \boldI)\widehat{\bGamma}^{H} \widehat{\bGamma}(\widehat{\bA} \otimes \boldI)$ before and after a reordering. The two left matrices correspond to the Gram matrix without and with reordering. The two right matrices also correspond to the Gram matrix without and with reordering but for a higher dimension.}
	\label{fig:bandmat}
\end{figure}

We now state the main theorem of the paper.
\paragraph{Theorem 1.}(Compact vectorized formulation) --
%
\emph{
	\label{prop:vect_form}
    	Let $\widehat{\by}^{(q)}$ (resp. $\widehat{\bd}_k^{(q)} $)  be the vectorization of the folding of $\widehat{\calY}$ (resp. $\widehat{\calD_k}$) along the dimension $q$,
    	$\widehat{\bZ}_{k}^{(q)} = [\widehat{\bZ}_{k}^{(q)}(:, 1) \mid \ldots \mid \widehat{\bZ}_{k}^{(q)}(:, R)]$ be the DFT of $\bZ_{k}^{(q)}$ along its columns and
    	$\widehat{\bz}^{(q)}_{k} = \texttt{vec}(\widehat{\bZ}_{k}^{(q)}).$ 
    	Then 
	\begin{equation}\label{eq:f vector}
	f\left(\{\bZ_{k}^{(r)}\}^{K, p}_{k=1, r=1}\right) \propto \dfrac{1}{2} \Big\lVert \widehat{\by}^{(q)} - \widehat{\bGamma}^{(q)} (\widehat{\bA}^{(q)} \otimes \boldI^{(q)})\widehat{\bz}^{(q)} \Big\rVert^2_F \; ,
	\end{equation}
	where 
	$\widehat{\bz}^{(q)} = [\widehat{\bz}^{(q)^\transpose}_1, \ldots, \widehat{\bz}^{(q)^\transpose}_K]^\transpose \in \IC^{K R n_q}$,
	$\widehat{\bGamma}^{(q)} = [\text{diag}(\widehat{\bd_1}^{(q)}), \ldots, \text{diag}(\widehat{\bd_K}^{(q)})] \in \IC^{n_1 \cdots n_p \times K n_1 \cdots n_p}$,
	$\boldI^{(q)} \in \IR^{n_q \times n_q}$ is the identity matrix
	 and 
	\begin{equation*}
	\widehat{\bA}^{(q)} = \begin{pmatrix} 
	\widehat{\bB}^{(q)}_{1} &  & 0 \\ 
	& \ddots &  \\ 
	0 &  & \widehat{\bB}^{(q)}_{K}
	\end{pmatrix}  \in \IC^{K \prod_{1, i \neq q}^{p} n_i \times K R} \quad \text{with} \quad \widehat{\bB}^{(q)}_{k} = (\stackrel{\hookleftarrow}{\odot}^p_{i=1, i\neq q} \widehat{\bZ}_{k}^{(i)} ) \; .
	\end{equation*}
}

\begin{proof}
	The proof mainly rests on the Proposition \ref{prop:Mat_kruskal} and on the formulation of the previous lemma.
	\begin{align*}
	& \left\lVert \widehat{\calY} - \sum_{k=1}^{K} \widehat{\calD}_k \ast [\![\widehat{\bZ}_{k, 1}, \cdots, \widehat{\bZ}_{k, p}]\!] \right\rVert^2_F = \left\lVert \widehat{\bY}^{(q)} - \sum_{k=1}^{K} \widehat{\bD}^{(q)}_k \ast \widehat{\bZ}_{k, q} \left(\stackrel{\hookleftarrow}{\odot}^p_{i=1} \widehat{\bZ}^{(i)}_{k} \right)^\transpose \right\rVert^2_F \quad \text{(matricization)} \\
	&= \left\lVert \widehat{\by}^{(q)} - \sum_{k=1}^{K} \widehat{\bd}_k^{(q)} \ast \left(\stackrel{\hookleftarrow}{\odot}^p_{i=1} \widehat{\bZ}^{(i)}_{k} \otimes I\right) \text{vec}(\widehat{\bZ}_{k, q}) \right\rVert^2_F \quad \text{(vectorization)} \\
	&= \left\lVert \widehat{\by}^{(q)} - \sum_{k=1}^{K} \text{diag}(\widehat{\bd}_k^{(q)}) \left(\stackrel{\hookleftarrow}{\odot}^p_{i=1} \widehat{\bZ}^{(i)}_{k} \otimes I\right) \text{vec}(\widehat{\bZ}_{k, q}) \right\rVert^2_F \qquad (\bx \ast \by = \text{diag}(\bx)\by)\\
	&= \left\lVert \widehat{\by}^{(q)} - \sum_{k=1}^{K} \text{diag}(\widehat{\bd}_k^{(q)})\widehat{\bC}_{k} \widehat{\bz}_{k} \right\rVert^2_F  \; ,
	\end{align*}
	where the last line is just notations. To obtain the final equality, we stack the matrices $\{\text{diag}(\widehat{\bd}_k^{(q)})\}$ and construct a block-diagonal matrix such that the block are the $\{\bC_k\}$. Finally we obtain the following equality.
	\begin{align*}
	\begin{pmatrix}
	\widehat{\bC}_{1} &  &  \\ 
	& \ddots &  \\ 
	&  & \widehat{\bC}_{K}
	\end{pmatrix} = 
	\begin{pmatrix}
	\widehat{\bB}_{1} \otimes \boldI &  &  \\ 
	& \ddots &  \\ 
	&  & \widehat{\bB}_{K} \otimes \boldI
	\end{pmatrix} =
	\begin{pmatrix}
	\widehat{\bB}_{1} &  &  \\ 
	& \ddots &  \\ 
	&  & \widehat{\bB}_{K}
	\end{pmatrix}  \otimes \boldI \; ,
	\end{align*}
	with $\widehat{\bB}_{k} = (\stackrel{\hookleftarrow}{\odot}^p_{i=1, i\neq q} \widehat{\bZ}_{k, i} )$. This end the proof.
\end{proof}

\paragraph{Corollary 1.}(Gradient of $f$) --
\emph{
With the notation of Theorem 2, the gradient of $f$ with respect to $\bz^{(q)} = [ \texttt{vec}({\bZ}^{(q)}_1)^\transpose, \ldots, \texttt{vec}({\bZ}^{(q)}_K)^\transpose]^\transpose$ is given by
\begin{equation}\label{eq:gradient formula_app}
\nabla_{\bz^{(q)}} 	f\left(\{\bZ_{k}^{(r)}\}^{K, p}_{k=1, r=1}\right) =  \text{IDFT}\left[\left((\widehat{\bA}^{(q)} \otimes \boldI)\widehat{\bGamma}^{(q)}\right)^H\left(\widehat{\bGamma}^{(q)} (\widehat{\bA}^{(q)} \otimes \boldI)\widehat{\bz}^{(q) }- \widehat{\by}^{(q)}\right)\right] \; ,
\end{equation}
where IDFT$[\cdot]$ stands for the Inverse Discrete Fourier Transform.
}
\begin{proof}
    See \citep{liu2018first} \textit{Section 2.3.2.} for a complete and detailed proof.
\end{proof}

\paragraph{Proposition 1.}
\emph{
%
The matrix $\bG \triangleq (\widehat{\bA}^H \otimes \boldI)\widehat{\bGamma}^H\widehat{\bGamma} (\widehat{\bA} \otimes \boldI)$ can be obtained by computing $K^2$ blocks $\bG_{k,\ell}, 1 \le k,\ell \le K,$  where
\begin{equation}
 \bG_{k,\ell} = \left((\stackrel{\hookleftarrow}{\odot}^p_{i=1, i\neq q} \widehat{\bZ}_{k, i} )^H \otimes I\right) \overline{\text{diag}(\widehat{\bd_k}^{(q)})} \text{diag}(\widehat{\bd_\ell}^{(q)}) \left((\stackrel{\hookleftarrow}{\odot}^p_{i=1, i\neq q} \widehat{\bZ}_{\ell, i} ) \otimes I\right) \; ,
\end{equation}
and each of these blocks can be computed in $\calO(R^2 \prod_{i=1, i\neq q}^{p}n_i)$ operations. 
}

\begin{proof}
	The first step of the proof requires to write $\widehat{\bGamma}$ as the Kronecker product of two specific matrices in order to use the equality $(\bA \otimes \bB)(\bC \otimes \bD) = (\bA\bC \otimes \bB\bD)$. Recall that $\widehat{\bGamma}$ is a block-diagonal matrix, i.e. $\widehat{\bGamma} = [\text{diag}(\widehat{\bd_1}^{(q)}), \cdots, \text{diag}(\widehat{\bd_K}^{(q)})]$. We can decompose each diagonal-block, $\text{diag}(\widehat{\bd_k}^{(q)})$, into smaller diagonal matrices using the kronecker product as follow
	\begin{align*}
		\text{diag}(\widehat{\bd_k}^{(q)}) = \sum_{i=1}^{N_{\backslash q}} \text{diag}(e_i) \otimes \Delta_{k, i} \quad \text{with} \quad N_{\backslash q} = \prod_{i=1, i\neq q}^{p} n_i \; ,
	\end{align*}
	where $\text{diag}(e_i) \in \IR^{N_{\backslash q} \times N_{\backslash q}}$ and $\Delta_{k, i} \in \IC^{n_q \times n_q}$ being the $i$-th diagonal block of $\text{diag}(\widehat{\bd_k}^{(q)})$ (i.e. $\Delta_{k, i} = \text{diag}(\widehat{\bd_k}^{(q)})_{(i \cdot n_q:(i+1) \cdot n_q), (i \cdot n_q:(i+1) \cdot n_q)}$). As $(\text{diag}(e_i) \otimes \Delta_{k, i})$ is decomposed into two matrices of the proper dimension, we can used the equality $(\bA \otimes \bB)(\bC \otimes \bD) = (\bA\bC \otimes \bB\bD)$ leading to
	\begin{align*}
&\left((\odot^{p}_{i=1, i\neq q} \widehat{\bZ}_{k, i} )^H \otimes I\right) \overline{\text{diag}(\widehat{\bd_k}^{(q)})} \text{diag}(\widehat{\bd_\ell}^{(q)}) \left((\odot^{p}_{i=1, i\neq q} \widehat{\bZ}_{\ell, i} ) \otimes I\right) \\
&=\left(\widehat{\bB}_{k}^H \otimes I\right) \sum_{i=1}^{N_{\backslash q}} \left(\text{diag}(e_i) \otimes \overline{\Delta_{k, i}}\right) \sum_{j=1}^{N_{\backslash q}} \left(\text{diag}(e_j) \otimes \Delta_{\ell, j}\right) \left(\widehat{\bB}_{\ell} \otimes I\right) \\
&=\sum_{i=1}^{N_{\backslash q}} \sum_{j=1}^{N_{\backslash q}}  \left(\widehat{\bB}_{k}^H \otimes I\right) \left(\text{diag}(e_i) \otimes \overline{\Delta_{k, i}}\right) \left(\text{diag}(e_j) \otimes \Delta_{\ell, j}\right) \left(\widehat{\bB}_{\ell} \otimes I\right) \\
&=\sum_{i=1}^{N_{\backslash q}} \sum_{j=1}^{N_{\backslash q}}\left(\widehat{\bB}_{k}^H\text{diag}(e_i) \text{diag}(e_j) \widehat{\bB}_{\ell} \otimes \overline{\Delta_{k, i}} \Delta_{\ell, j} \right)\\ 
&=\sum_{i=1}^{N_{\backslash q}} \left(\widehat{\bB}_{k}^H\text{diag}(e_i) \text{diag}(e_i) \widehat{\bB}_{\ell} \otimes \overline{\Delta_{k, i}} \Delta_{\ell, i} \right)\\
&=\sum_{i=1}^{N_{\backslash q}} \left((\text{diag}(e_i)\widehat{\bB}_{k})^H \text{diag}(e_i) \widehat{\bB}_{\ell} \otimes \overline{\Delta_{k, i}} \Delta_{\ell, i} \right) =\sum_{i=1}^{N_{\backslash q}} \left(\overline{\widehat{\bB}}_{k}(i, :) \circ \widehat{\bB}_{\ell}(i, :) \otimes \overline{\Delta_{k, i}} \Delta_{\ell, i} \right) \; .
\end{align*}

The outer product of two vectors in $\IC^{1 \times R}$ is of complexity $\calO(R^2)$. This product is made for each $1 \leq i \leq N_{\backslash q}$ and for each $K^2$ blocks. Hence, the overall complexity is $\calO((KR)^2 \prod_{i=1, i\neq q}^{p}n_i)$. In addition, as this matrix has a particular block structure, i.e. this is a band-matrix (see Figure \ref{fig:bandmat}), its product with a vector of size $KRn_q$ is of small complexity equals to $\calO((KR)^2 n_q)$.
\end{proof}

\end{document}